\def\O{\mathcal{O}}
\def\0{\mathbf{0}}
\def\1{\mathbf{1}}
\def\x{\mathbf{x}}
\def\\x{\overline{\mathbf{x}}}
\def\y{\mathbf{y}}
\def\\y{\overline{\mathbf{y}}}
\def\mN{\mathcal N}
\def\mP{\mathcal P}
\def\mR{\mathcal R}
\newtheorem{thm}{Theorem}
\newtheorem{lem}{Lemma}
\newtheorem{defi}{Definition}
\newtheorem{rem}{Remark}
\newtheorem{ex}{Example}
\icmltitlerunning{On the Number of Linear Regions of Convolutional Neural Networks}
\begin{document}

\twocolumn[
\icmltitle{On the Number of Linear Regions of Convolutional Neural Networks}




\begin{icmlauthorlist}
\icmlauthor{Huan Xiong}{mbzuai}
\icmlauthor{Lei Huang}{iiai}
\icmlauthor{Mengyang Yu}{iiai}
\icmlauthor{Li Liu}{iiai}
\icmlauthor{Fan Zhu}{iiai}
\icmlauthor{Ling Shao}{mbzuai,iiai}
\end{icmlauthorlist}

\icmlaffiliation{iiai}{Inception Institute of Artificial Intelligence,
  Abu Dhabi, UAE}
\icmlaffiliation{mbzuai}{Mohamed bin Zayed University of Artificial Intelligence, UAE}

\icmlcorrespondingauthor{Huan Xiong}{huan.xiong@mbzuai.ac.ae}

\icmlkeywords{Machine Learning, ICML}

\vskip 0.3in
]



\printAffiliationsAndNotice{} 

\begin{abstract}
One fundamental problem in deep learning is understanding the outstanding performance of deep Neural Networks (NNs) in practice. One explanation for the superiority of NNs is that they can realize a large class of complicated functions, i.e., they have powerful expressivity. The expressivity of a ReLU NN can be quantified by the maximal number of linear regions it can separate its input space into. In this paper, we provide several mathematical results needed for studying the linear regions of CNNs, and use them to derive the maximal and average numbers of linear regions for one-layer ReLU CNNs. Furthermore, we obtain upper and lower bounds for the number of linear regions of multi-layer ReLU CNNs. Our results suggest that deeper CNNs have more powerful expressivity than their shallow counterparts, while CNNs have more expressivity than fully-connected NNs per parameter. 
\end{abstract}

\section{Introduction}
Over the past decade, deep Neural Networks (NNs), especially deep Convolutional Neural Networks (CNNs), have attracted much attention and achieved state-of-the-art results in many machine learning tasks, such as speech recognition, image classification, and video games \cite{hinton2012deep,goodfellow2013maxout,sainath2013deep,abdel2014convolutional,silver2016mastering}. Various popular and powerful CNNs, such as AlexNet \cite{krizhevsky2012imagenet}, VGGNet \cite{simonyan2014very}, GoogleNet \cite{szegedy2015going} and ResNet \cite{he2016deep}, have  empirically shown that applying deeper networks can significantly improve the performance of various network architectures. 
A key problem in the study of deep learning is to understand why neural networks, especially very deep neural networks, perform well in practice.

One explanation for the superiority of NNs is their powerful expressivity, i.e., they can represent a large classes of functions arisen in practice. It has been shown that an NN with only one hidden layer can adequately approximate any given continuous function if its width is large enough \cite{  cybenko1989approximation, funahashi1989approximate, hornik1991approximation,barron1994approximation}. However, normally the width of such a hidden layer has to be exponentially large in order to approximate a given function to arbitrary precision. In contrast, if multiple layers are involved, \cite{hanin2017universal,hanin2017approximating,lu2017expressive} proved that to approximate any given Lebesgue-integrable function from $\mathbb{R}^n$ to $\mathbb{R}$ to arbitrary precision, one only needs to apply some multi-layer NN with width at most $n+1$, while the depth depends on the given function and may be very large. Although these approximate results show that NNs can represent a large class of functions, there are few hints on how to determine the suitable architectures needed to realise a given function or which architectures are more efficient. Recently, several theoretical studies have been conducted to compare the efficiency of distinct architectures. It was proved in \cite{telgarsky2015representation,telgarsky2016benefits,arora2016understanding} that certain functions realized by some deep architectures will require a shallow network with exponentially more parameters to represent. For example,  \cite{telgarsky2016benefits} showed that, for any positive integer $n$, there exist some networks with depth $\Theta(n^3)$, width $\Theta(1)$, and $\Theta(1)$ parameters, that cannot be approximated by an $\O(n)$-layer network unless it has a width of $\Omega(2^n)$. Their results reveal that deeper networks usually have more powerful expressivity of functions, which provides an explanation for why deeper networks outperform shallow networks with the same number of parameters in many practical tasks. 

A natural measure for characterizing the expressivity of NNs is the maximal number of distinct linear regions \cite{pascanu2013number} in the domain of functions that can be computed by NNs.
Among this direction, people mainly focus on NNs whose activation functions are Rectified Linear Units (ReLUs), which were first introduced in $2000$ \cite{hahnloser2000digital, hahnloser2001permitted} and have been widely adopted in various architectures since $2011$ \cite{glorot2011deep}. It is known that the composition of piecewise linear\footnote{Although ``piecewise affine" would be more accurate, we use ``piecewise linear" here since it is a conventional concept.} functions  is still piecewise linear; thus, every feed-forward ReLU NN (neural network with only ReLU activations and linear hidden layers) with certain parameters can be seen as a piecewise linear function. This means that the input space of a ReLU network can be divided into several distinct pieces (we call them linear regions), such that the function represented by the network is affine when restricted to each piece. Then, the expressivity of a ReLU network can be quantified by the maximal number of linear regions it can separate its input space into. Pascanu et al. \yrcite{pascanu2013number} first considered a one-layer fully-connected ReLU network with $n_0$ inputs and $n_1$ hidden neurons, and showed that its maximal number of linear regions equals $\sum_{i=0}^{n_0} \binom{n_1}{i}$ by translating this problem to a counting problem of regions of hyperplane arrangements in general position (the definition of ``general position" is given in the Section $1$ of the Supplementary Material), then directly applying Zaslavsky's Theorem \cite{zaslavsky1975facing,Stanley04anintroduction}.   Furthermore, using the idea of identifying distinct linear regions, they derived a lower bound $\left(\prod_{l=0}^{L-1}\left\lfloor \frac{n_l}{n_0} \right\rfloor\right)\sum_{i=0}^{n_0} \binom{n_L}{i}$ for the maximal number of linear regions of a fully-connected ReLU network with $n_0$ inputs and $L$ hidden layers of widths $n_1 , n_2 , \ldots , n_L$. Based on these results, they concluded that deep  fully-connected ReLU NNs have exponentially more maximal linear regions than their shallow counterparts with the same number of parameters. Later, the lower bound was improved to $\left(\prod_{l=0}^{L-1}\left\lfloor \frac{n_l}{n_0} \right\rfloor^{n_0} \right)\sum_{i=0}^{n_0} \binom{n_L}{i}$ by Montúfar et al. \yrcite{montufar2014number}. Following their work, various results on the lower and upper bounds for the maximal number of linear regions of fully-connected ReLU NNs have been obtained \cite{bianchini2014complexity,telgarsky2015representation,poole2016exponential,montufar2017notes,raghu2017expressive,serra2018bounding,croce2018provable,hu2018nearly,serra2018empirical,hanin2019complexity,hanin2019deep}. For example, Arora et al. \yrcite{arora2016understanding} obtained a lower bound $2\sum_{i=0}^{n_0-1}\binom{m-1}{i}n^{L-1}$  for $n_1= 2m$ and $n_2 = n_3 = \ldots = n_L = n$.  Raghu et al. \yrcite{raghu2017expressive} derived an upper bound $\O(n^{Ln_0})$ when $n_1 = n_2 = \ldots = n_L = n$. Montúfar et al.  \yrcite{montufar2017notes} proved an upper bound of  $\prod_{l=1}^{L}\sum_{i=0}^{m_{l}} \binom{n_l}{i}$ where $m_l = \min\{n_0, n_1 , n_2 , \ldots , n_{l-1}  \}$.   Later, these lower and upper bounds were improved by \cite{serra2018bounding}. 
Recently, Hanin et al. \cite{hanin2019complexity,hanin2019deep} studied the average  number of linear regions when the weights range over $\mathbb{R}^{\#weights}$ and derived an upper bound for the expectation of the number of linear regions of ReLU NNs under several mild assumptions. Other studies have replaced the ReLU activation with the maxout activation or piecewise linear functions, and derived several bounds for the number of linear regions in these cases \cite{montufar2014number,hu2018nearly}.

Most studies on the number of linear regions of ReLU NNs assume that the networks are fully-connected. Under this assumption, the problem is equivalent to counting regions of hyperplane arrangements in general position. Thus, one can use a well-established mathematical tool on hyperplane arrangements, Zaslavsky's Theorem \cite{zaslavsky1975facing}, to directly obtain the maximal numbers of linear regions for one-layer fully-connected ReLU NNs, then derive the upper and lower bounds for multi-layer NNs by induction.
Since CNNs are very popular in practice, it is natural to study an analogous problem on the number of linear regions for ReLU CNNs. However, as far as we know, there are no specific results for CNNs so far.
The difficulty is that, although the problem for CNNs can also be translated to counting regions of hyperplane arrangements, usually the corresponding hyperplane arrangements are not in general position for CNNs, as discussed in Section~\ref{sec: one-layer} and the Supplementary Material. Therefore, mathematical tools like Zaslavsky's Theorem cannot be directly applied.

\textbf{Our Contributions.} In this paper, we establish new mathematical tools needed to study hyperplane arrangements (which usually are not in general position) arisen in CNN case, and use them to derive results on the number of linear regions for ReLU CNNs. {\sl To the best of our knowledge, our paper is the first work on calculating the number of linear regions for CNNs.} The main contributions of this work~are:
\begin{itemize}
\item We translate the problem of counting the linear regions of CNNs to a problem on counting the regions of some class of hyperplane arrangements which usually are not in general position, and develop suitable mathematical tools to solve this problem. Through this we provide the exact formula for the maximal number of linear regions of a one-layer ReLU CNN $\mN$ and show that it actually equals the expectation of the number of linear regions when the weights of $\mN$ range over $\mathbb{R}^{\#weights}$. The asymptotic formula for this number is also derived.

\item Furthermore, we derive upper and lower bounds for the number of linear regions of multi-layer ReLU CNNs by induction and the idea of identifying distinct linear regions. 

\item Based on these bounds, we show that deep ReLU CNNs have exponentially more linear regions per parameter than their shallow counterparts under some mild assumptions on the architectures. This means that deep CNNs have more powerful expressivity than shallow ones and thus provides some hints on why CNNs normally perform better as they get deeper. We also show that ReLU CNNs have much more expressivity than the fully-connected ReLU NNs with asymptotically the same number of parameters, input dimension and number of layers.    
\end{itemize}
This paper is organized as follows. We provide a detailed description of the CNN architectures that will be considered throughout the paper, and then introduce the definition of activation patterns and linear regions
in Section \ref{sec: preliminary}. In Section \ref{sec: one-layer}, we obtain the maximal and average numbers of linear regions of one-layer ReLU CNNs in Theorem \ref{th:1}.  In Section \ref{sec: multi-layer}, we derive results on multi-layer ReLU CNNs. A comparison on the expressivity of distinct architectures is given in Section~\ref{sec: comparison}. We briefly explain the experimental settings for verifying our results by sampling methods in Section~\ref{sec: experiment}. In Section \ref{sec: conclusion}, we provide the conclusion and propose future directions.  The preliminary knowledge on hyperplane arrangements and the proofs of Theorems are given in the Supplementary Material.

\section{Preliminary} \label{sec: preliminary}
In this section, we fix some notations and introduce the CNN architecture which will be considered in this paper. Let $\mathbb{N}$, $\mathbb{N}^+$ and $\mathbb{R}$ be the sets of nonnegative integers, positive integers and real numbers, respectively. For a set $S$, let $\#S$ denote the number of elements in $S$.  
In this paper, we consider ReLU CNNs $\mN$ with $L$ hidden convolutional layers (we exclude pooling layers and fully-connected layers, and do not use zero-padding for simplicity).   
Let the dimension of input neurons of $\mN$ be $n_0^{(1)} \times n_0^{(2)} \times d_0$, where $n_0^{(1)},n_0^{(2)},d_0$ are the height, the width and the depth of the input space (we also call the input space the $0$-th layer), respectively. Assume that there are $n_l^{(1)} \times n_l^{(2)} \times d_l$ neurons (i.e., $d_l$ feature maps with the dimension $n_l^{(1)} \times n_l^{(2)}$) in the $l$-th hidden layer for $1\leq l \leq L$. The Rectified Linear Unit (ReLU) is adopted as the activation function for each neuron in the hidden layers. There are $d_l$ filters with dimension $f_l^{(1)}\times f_l^{(2)}\times d_{l-1}$ between neurons in the $(l-1)$-th and the $l$-th hidden layers. Such filters slide from left to right and from top to bottom across feature maps as far as possible with a stride $s_l$. 
We assume that the output layer has only one unit, which is a linear combination of the outputs in the $L$-th hidden layer. As explained in Lemma $2$ from \cite{pascanu2013number}, the number of (linear) output units of a ReLU NN does not affect the number of linear regions that it can realize since the composition of affine functions is still affine. By the same argument this claim is also true for a ReLU CNN. {\sl Therefore, in this paper, we take one unit in the output layer for simplicity and ignore the output layer in the statements of results.}  
Let $X^0=(X^0_{a,b,c})_{n_0^{(1)} \times n_0^{(2)} \times d_0} \in \mathbb{R}^{n_0^{(1)} \times n_0^{(2)} \times d_0}$ be the inputs of $\mN$ and $X^l=(X^l_{a,b,c})_{n_l^{(1)} \times n_l^{(2)} \times d_l} \in \mathbb{R}^{n_l^{(1)} \times n_l^{(2)} \times d_l}$ be the outputs of the $l$-th hidden layer. 
The weights $W=(W^1,W^2,\ldots,W^L)$ and biases $B=(B^1,B^2,\ldots,B^L)$ are drawn from a fixed distribution $\mu$ which has densities with respect to Lebesgue measure in $\mathbb{R}^{\#weights+\#bias}$,  where $W^{l}=(W^{l,1},W^{l,2},\ldots,W^{l,d_l})$ such that  $W^{l,k}=(W_{a,b,c}^{l,k})_{f_l^{(1)}\times f_l^{(2)}\times d_{l-1}} \in \mathbb{R}^{f_l^{(1)}\times f_l^{(2)}\times d_{l-1}}$ is the weight matrix of the $k$-th filter between neurons in the $(l-1)$-th and the $l$-th hidden layers; and $B^{l}=(B^{l,1},B^{l,2},\ldots,B^{l,d_l})\in \mathbb{R}^{d_l}$, such that $B^{l,k} \in \mathbb{R}$ is the bias for the $k$-th filter between neurons in the $(l-1)$-th and the $l$-th hidden layers.
Therefore, for any given weights $W$ and biases $B$, this CNN can been seen as a piece-wise linear function 
$
\mathcal{F}_{\mN,W,B}: \quad 
\mathbb{R}^{n_0^{(1)} \times n_0^{(2)} \times d_0} \rightarrow \mathbb{R}
$
given by 
\begin{align*}
\mathcal{F}_{\mN,W,B}(X^0)=g_{L+1}\circ h_{L}\circ g_{L}\circ\cdots\circ h_{1}\circ g_{1}(X^0),
\end{align*}
where $g_l$ is an affine function and $h_l$ is a ReLU activation function. More specifically, let $Z^l(X^0;\theta)=(Z^l_{i,j,k}(X^0;\theta))_{n_l^{(1)} \times n_l^{(2)} \times d_l} \in \mathbb{R}^{n_l^{(1)} \times n_l^{(2)} \times d_l}$ be the pre-activations of the $l$-th layer, where  $\theta:=\{W,B\}$ is a fixed set of parameters (weights and biases) in the CNN $\mN$. For $1\leq l \leq L$, we have 
\begin{align}\label{eq:convolution_formula2}
& Z^l_{i,j,k}(X^0;\theta)=g_{l}(X^{l-1})  \nonumber
\\&=
\sum_{a=1}^{f_l^{(1)}}\sum_{b=1}^{f_l^{(2)}} \sum_{c=1}^{d_{l-1}}  W_{a,b,c}^{l,k} X^{l-1}_{a+(i-1)s_l,b+(j-1)s_l,c}   + B^{l,k}
\end{align}
and 
\begin{align}\label{eq:convolution_formula3}
X^l_{i,j,k}&=h_{l}(Z^l_{i,j,k}(X^0;\theta))=\max(Z^l_{i,j,k}(X^0;\theta),0).
\end{align}
The following relation between the number of neurons in the $(l-1)$-th and the $l$-th layers are easy to derive. 
\begin{lem}[\cite{dumoulin2016guide}]\label{lem:1}
For $1\leq l \leq L$, we have
$n_l^{(1)} = \lfloor \frac{n_{l-1}^{(1)}-f_l^{(1)}}{s_l} \rfloor+1$
and
$n_l^{(2)} = \lfloor \frac{n_{l-1}^{(2)}-f_l^{(2)}}{s_l} \rfloor+1$,
\newline
where $\left\lfloor x \right\rfloor$ is the greatest integer less~than~or~equal~to~$x$.
\end{lem}

\begin{rem}\label{rem:1}
When the stride $s_l=1$, we have $n_l^{(1)} = n_{l-1}^{(1)}-f_l^{(1)} +1$
and
$n_l^{(2)} = n_{l-1}^{(2)}-f_l^{(2)} +1$. In this case, when a filter slides, all neurons in the $(l-1)$-th hidden layer are involved in the convolutional calculation. 
\end{rem}

By analogy with the ReLU NN case \cite{pascanu2013number,montufar2014number,serra2018bounding,brandfonbrener2018expressive,hanin2019complexity,hanin2019deep}, we introduce the following definition of activation patterns and linear regions for ReLU CNNs.  
\begin{defi}
[Activation Patterns and Linear Regions]\label{def:activation-regions}
\textit{
Let $\mN$ be a ReLU CNN with $L$ hidden convolutional layers given above. 
An activation pattern of $\mN$ is a function $\mP$ from the set of neurons to $\{1,-1\}$, i.e., for each neuron $z$ in $\mN$, we have $\mP(z) \in \{1,-1\}$.   
Let $\theta$ be a fixed set of parameters (weights and biases) in $\mN$, and $\mP$ be an activation pattern. The region corresponding to $\mP$ and $\theta$ is
\begin{align*}
\mR(\mP;\theta) :=
\{
X^0\in \mathbb{R}^{n_0^{(1)} \times n_0^{(2)} \times d_0}:  \\ z(X^0;\theta)\cdot {\mP(z)} >0,\quad \forall z \text{ a neuron in }\mN
\},
\end{align*}
where $z(X^0;\theta)$ is the pre-activation of a neuron $z$. A linear region of $\mN$ at $\theta$ is a non-empty set $\mR(\mP,\theta)\neq \emptyset$ for some activation pattern $\mP$. Let $R_{\mN,\theta}$ denote the number of linear regions of $\mN$ at $\theta$, i.e., 
$
R_{\mN,\theta} := \#\{  \mR(\mP;\theta): \mR(\mP;\theta)\neq \emptyset ~ \text{ for some activation pattern } \mP    \}.
$
Moreover, let $R_{\mN}:=\max_\theta R_{\mN,\theta}$ denote the maximal number of linear regions of~$\mN$ when $\theta$ ranges over $\mathbb{R}^{\#weights+\#bias}$.
}
\end{defi}
\begin{rem}
By the above definition it is easy to check that each non-empty $\mR(\mP;\theta)$ is a convex set. Furthermore, $\mathcal{F}_{\mN,W,B}$ becomes an affine function when restricted to each nonempty linear region $\mR(\mP;\theta)$ of $\mN$. Thus $\mathcal{F}_{\mN,W,B}$ can represent a piecewise linear function with $R_{\mN,\theta}$ linear pieces. Therefore, the number $R_{\mN,\theta}$ of linear regions can be seen as a measure on the expressivity of a CNN. The more linear regions a CNN has, the more complicated functions it can represent. The aim of this paper is to provide a characterization of the number of linear regions for CNNs, and use it to compare the expressivity of different CNNs.  
\end{rem}

By Definition \ref{def:activation-regions}, each activation pattern of a CNN $\mN$ is a function $\mP$ from the set of its neurons to $\{1,-1\}$. It is obvious that there are at most $2^{\# neurons}$ such functions. Therefore, the number of activation patterns is also at most $2^{\# neurons}$. Then, we derive the following trivial upper bound for the number of linear regions of a ReLU CNN. Actually, a similar result for a fully-connected ReLU NN is given in Proposition $3$ from \cite{montufar2014number}.  
\begin{lem}\label{lem:naive_upper_bound}
Let $\mN$ be a ReLU CNN with $n$ hidden neurons. Then, the number $R_{\mN}$ of linear regions of $\mN$ is at most $2^{n}$.
\end{lem}

\section{The Number of Linear Regions for One-Layer CNNs}
\label{sec: one-layer}
In this section, we obtain the exact formula for the maximal number and the average number of linear regions of one-layer CNNs, and derive their asymptotic formulas when the number of filters tends to infinity.

\subsection{Exact Formulas.}
First, we recall the following result on the maximal number of linear regions of one-layer fully-connected ReLU NNs.

\begin{thm} [Proposition $2$ from \cite{pascanu2013number}] \label{thm:one_layer__upper_bound_NN}
Let $\mN$ be a one-layer ReLU NN with $n_0$ input neurons and $n_1$
hidden neurons. Then, the maximal number of linear regions of $\mN$ is equal to
$
\sum_{i=0}^{n_0} \binom{n_1}{i}.
$
\end{thm}

Theorem \ref{thm:one_layer__upper_bound_NN} was derived by translating this problem to a study on the number of regions of hyperplane arrangements in general position, then directly applying a pure mathematical result, Zaslavsky's Theorem \cite{zaslavsky1975facing,Stanley04anintroduction}, which states that, when an arrangement with $n_1$ hyperplanes is in general position, $\mathbb{R}^{n_0}$ can be divided into $\sum_{i=0}^{n_0} \binom{n_1}{i}$ distinct regions.
Basic background on hyperplane arrangements and general position is given in Section $1$ of the Supplementary Material.  
 
Since the set of ReLU CNNs can be seen as a subset of ReLU NNs, Theorem \ref{thm:one_layer__upper_bound_NN} also gives an upper bound for $R_\mN$ where $\mN$ is a one-layer ReLU CNN. However, for the CNN case, usually  this upper bound is not equal to the exact number since the corresponding hyperplane arrangement are not in general position normally. In this paper, we develop new tools to study the number of regions of corresponding hyperplane arrangements (which are not in general position usually) for ReLU CNNs. More precisely, we translate the problem for ReLU CNNs to a tractable integer programming problem by techniques and results from combinatorics and linear algebra. (see Eqs. \eqref{def: K_N}, \eqref{eq:1} and Section~$2$ of the Supplementary Material).    
Our first main result is stated as follows, which shows that the exact number of $R_\mN$ is much smaller than the upper bound given by Theorem~\ref{thm:one_layer__upper_bound_NN} for a one-layer ReLU CNN $\mN$.

\begin{thm}\label{th:1}
Assume that $\mN$ is a one-layer ReLU CNN with input dimension $n_0^{(1)} \times n_0^{(2)} \times d_0$ and hidden layer dimension $n_1^{(1)} \times n_1^{(2)} \times d_1$. The $d_1$ filters have the dimension $f_1^{(1)}\times f_1^{(2)}\times d_0$ and the stride $s_1$.  
Suppose that the parameters $\theta=\{W,B\}$ are drawn from a fixed distribution $\mu$ which has densities with respect to Lebesgue measure in $\mathbb{R}^{\#weights+\#bias}$. 
Define $I_\mN=\{(i,j):~ 1\leq i \leq n_1^{(1)}, ~ 1\leq j \leq n_1^{(2)}\}$ and $S_\mN=(S_{i,j})_{n_1^{(1)}\times n_1^{(2)}}$ where   
\begin{align*}
S_{i,j}=\{ (a+(i & -1)s_1, ~  b+(j-1)s_1,c): 1\leq a \leq f_1^{(1)} ,\\&  1\leq b \leq f_1^{(2)} ,~ 1\leq c \leq d_0\}
\end{align*}
for each $(i,j) \in I_\mN$. Therefore, $S_{i,j}$ is the set of indexes of input neurons involved in the calculation of the pre-activation $Z^1_{i,j,k}(X^0;\theta)$. Furthermore, $\cup_{(i,j) \in I_\mN} S_{i,j} $ is the set of indexes of input neurons involved in the convolutional calculation of $\mathcal{F}_{\mN,W,B}$.
Let 
\begin{align} \label{def: K_N}
& K_{\mN}:  =\{(t_{i,j})_{(i,j)\in I_\mN}:t_{i,j}\in \mathbb{N},~~   \nonumber
\\&
\sum_{(i,j)\in J}t_{i,j}\leq \#\cup_{(i,j) \in J} S_{i,j}~~ \forall J \subseteq I_\mN \}.
\end{align}
Then, we obtain the following two results.

(i) The maximal number $R_\mN$ of linear regions of $\mN$  equals
\begin{align}\label{eq:1}
R_\mN = \sum_{(t_{i,j})_{(i,j)\in I_\mN} \in K_{\mN} }~~\prod_{{(i,j)\in I_\mN}}\binom{d_1}{t_{i,j}}.
\end{align}

(ii) Moreover, Eq. \eqref{eq:1} also equals the expectation of the number $R_{\mN,\theta}$ of linear regions of $\mN$:
\begin{align}\label{eq:expectation}
\mathbb{E}_{\theta \sim \mu} [R_{\mN,\theta}] = \sum_{(t_{i,j})_{(i,j)\in I_\mN} \in K_{\mN} }~~\prod_{{(i,j)\in I_\mN}}\binom{d_1}{t_{i,j}}.
\end{align}
\end{thm}
The detailed proof of Theorem \ref{th:1} is given in the Supplementary Material. We briefly explain the idea below.

\begin{proof}[Outline of the Proof of Theorem \ref{th:1}] First, by Definition \ref{def:activation-regions}, we translate the problem to the calculation of the number of regions of some specific hyperplane arrangements which are not in general position usually. Next, in Proposition $2$ of the Supplementary Material, we derive a generalization of Zaslavsky's Theorem, which can be used to handle a large class of hyperplane arrangements that are not in general position. More specifically, we obtain an upper bound for the number of regions of such hyperplane arrangements and show that if a hyperplane arrangement satisfies the two conditions (i) and (ii) in Proposition $2$, then this upper bound equals the exact number of its regions. The rest of Section $2$ (Lemmas $3$ -- $7$) in the Supplementary Material is devoted to showing that, actually, the specific hyperplane arrangements corresponding to a one-layer ReLU CNN $\mN$ satisfy the conditions for hyperplane arrangements in Proposition $2$. Thus, finally we can apply Proposition $2$ of the Supplementary Material to derive the maximal and average numbers of linear regions for $\mN$. 
\end{proof}

Next, we provide several examples to explain Theorem \ref{th:1}.
\begin{ex}\label{ex:one _layer}
Let $n_0^{(1)} = d_0 = f_1^{(1)} = s_1 = 1$,  $n_0^{(2)} =  3$ and $f_1^{(2)}= 2$ in Theorem \ref{th:1}. Then by Lemma \ref{lem:1} we have $n_1^{(1)} = 1$ and $n_1^{(2)} = 2$.  Furthermore, we obtain $ I_\mN = \{ (1,1), (1,2) \}$, $S_{1,1} = \{ (1,1,1), (1,2,1)\}$,  $S_{2,1} = \{ (1,2,1), (1,3,1) \}$ and 
$ 
K_{\mN}  =\{(t_{1,1},t_{1,2})\in \mathbb{N}^2: ~ t_{1,1}\leq 2, ~ t_{1,2}\leq 2,     ~
t_{1,1} + t_{1,2}\leq 3 \} =\{(0,0),~ (0,1),~ (0,2),~ (1,0),~ (1,1),~ (1,2),~ (2,0),~ (2,1)\}.
$
Finally, by Eq. \eqref{eq:1} 
we derive 
$$
R_\mN = 
\sum_{(t_{1,1},t_{1,2})\in K_{\mN} }~~\binom{d_1}{t_{1,1}} \binom{d_1}{t_{1,2}} = d_1^3+d_1^2+d_1+1,
$$
which is also verified by our experiments for $1\leq d_1\leq 8$ (the experimental settings are given in Section \ref{sec: experiment}).
On the other hand, by Lemma \ref{lem:naive_upper_bound}, we have $R_\mN \leq 2^{2d_1}$; by Theorem \ref{thm:one_layer__upper_bound_NN}, we obtain $R_\mN \leq \sum_{i=0}^{3}\binom{2d_1}{i}$ (since the CNN in Example \ref{ex:one _layer} can be seen as a one-layer NN with $3$ input neurons and $2d_1$ hidden neurons).  When $1\leq d_1\leq 8$, the above bounds for $R_\mN$ are given in Table \ref{tab:1} (more examples are given in Section $5$ of the Supplementary Material). As can be seen, the exact number of $R_\mN$ we obtained in Theorem \ref{th:1} is smaller than the upper bounds obtained by previous methods.  
\end{ex}

\begin{table*}[t!]
\caption{The results for the maximal number of linear regions for a one-layer ReLU CNN $\mN$ with input dimension $1\times 3\times 1$, hidden layer dimension $1\times 2\times d_1$, $d_1$ filters with dimension $1\times 2\times 1$, and stride $s_1=1$.  More precisely, we have $R_\mN = d_1^3+d_1^2+d_1+1.$}
\begin{center}
\resizebox{0.93\linewidth}{!}{
\begin{tabular}{c|c|c|c|c|c|c|c|c}
\hline
&$d_1=1$&$d_1=2$&$d_1=3$&$d_1=4$&$d_1=5$&$d_1=6$&$d_1=7$&$d_1=8$
\\
\hline
$R_\mN$ by Theorem \ref{th:1} & 4&15&40&85&156&259&400&585
\\
\hline
Upper bounds by Theorem \ref{thm:one_layer__upper_bound_NN} & 4&15&42&93&176&299&470&697
\\
\hline
Upper bounds by Lemma $2$ & 4&16&64&256&1024&4096&16384&65536
\\
\hline
\end{tabular}\label{tab:1}
}
\end{center}
\end{table*}

\begin{ex} 
Let  $s_1=1$, $f_1^{(1)}=n_0^{(1)}$ and $f_1^{(2)}=n_0^{(2)}$. Then $n_1^{(1)}=n_1^{(2)}=1$.  Therefore, the CNN becomes a one-layer fully-connected ReLU NN with $d$ input neurons and $d_1$ hidden neurons where $d=n_0^{(1)} \times n_0^{(2)} \times d_0$. Under these assumptions, by Theorem \ref{th:1} we have $I_\mN=\{(1,1)\}$, $K_{\mN}=\{k\in\mathbb{Z}: 0\leq k\leq d \}$. Thus, by \eqref{eq:1} the maximal number of linear regions of a one-layer fully-connected ReLU NN $\mN$ with input dimension $d$ and output dimension $d_1$ equals 
$
R_\mN = \sum_{k=0}^{d}\binom{d_1}{k},
$
which implies the well-known result in Theorem \ref{thm:one_layer__upper_bound_NN} (see   \cite{pascanu2013number,montufar2014number,serra2018bounding,hanin2019complexity,hanin2019deep}). This means that Theorem \ref{th:1} is a more general result than Theorem~\ref{thm:one_layer__upper_bound_NN}. 
\end{ex}

\begin{ex}
Let  $s_1=f_1^{(1)}=f_1^{(2)}=1$, which means that each filter has the dimension $1\times 1 \times d_0$. Then, $S_{i,j}=\{ (i,j,c):~ 1\leq c \leq d_0)\}$ is a $d_0$-element set and thus $\#\cup_{(i,j) \in J} S_{i,j} = d_0\times\#J$ for each $J \subseteq I_\mN$. Therefore,  
$
K_{\mN}
=
\{(t_{i,j})_{n_1^{(1)}\times n_1^{(2)}}: 0 \leq t_{i,j} \leq d_0 \}
=
\{0,1,2,\ldots,d_0\}^{n_1^{(1)}\times n_1^{(2)}}
$
and 
\begin{align*}
R_\mN &= \sum_{(t_{i,j})_{n_1^{(1)}\times n_1^{(2)}} \in \{0,1,2,\ldots,d_0\}^{n_1^{(1)}\times n_1^{(2)}} }~~\prod_{{(i,j)\in I_\mN}}\binom{d_1}{t_{i,j}}.
\end{align*}
When $d_1$ tends to infinity, we obtain 
\begin{align}\label{filter_one}
R_\mN = \left(\frac{ {d_1}^{d_0}}{d_0!}\right)^{n_1^{(1)}\times n_1^{(2)}} + ~~ \O(d_1^{n_0^{(1)}\times n_0^{(2)}\times d_0-1}).
\end{align}
We can see that $R_\mN=  \Theta(d_1^{n_0^{(1)}\times n_0^{(2)}\times d_0})$ in this example. In the following subsection, we will show that this also holds for general cases.
\end{ex}
\subsection{Asymptotic Analysis.}
In this subsection, we study the asymptotic behavior of $R_\mN$.

For two functions $f(n)$ and  $g(n)$, we write $f(n) = \O(g(n))$ if there exists some positive constant $c>0$ such that $f(n) \leq c g(n)$ for all $n$ larger than some constant; $f(n) = \Omega(g(n)) $ if there exists some positive constant $c$ such that $f(n) \geq c g(n)$ for all $n$ large enough; and $f(n) = \Theta(g(n)) $ if there exists some positive constants $c_1,c_2$ such that $c_1 g(n) \leq f(n) \leq c_2 g(n)$ for all $n$ large enough.

We need the following lemma in the asymptotic analysis.

\begin{lem}\label{cor:1}
Let $\mN, I_\mN,K_\mN, S_{i,j}$ be the same as defined in Theorem \ref{th:1}. Then,
there always exists some $(t_{i,j})_{(i,j)\in I_\mN} \in K_{\mN}$ such that 
$$
 \sum_{(i,j)\in I_\mN}t_{i,j}= \#\cup_{(i,j) \in I_\mN} S_{i,j}.
$$
\end{lem}

We derive the following asymptotic formula for $R_\mN$.

\begin{thm} [Asymptotic Analysis] \label{th: asy}
Let $\mN$ be the one-layer ReLU CNN defined in Theorem \ref{th:1}.
Suppose that $ n_0^{(1)},n_0^{(2)},d_0, f_1^{(1)}, f_1^{(2)}, s_1$ are some fixed integers.  When $d_1$ tends to infinity, the asymptotic formula for the maximal number of linear regions of $\mN$ behaves as $R_\mN  =  \Theta(d_1^{ \#\cup_{(i,j) \in I_\mN} S_{i,j}  })$ asymptotically. Furthermore, if all input neurons have been involved in the convolutional calculation, i.e., $\cup_{(i,j) \in I_\mN}  S_{i,j} = \{ (a,b,c): 1\leq a \leq n_0^{(1)} ,~ 
1\leq b \leq n_0^{(2)} ,~ 1\leq c \leq d_0 \}$, we have  
\begin{align} 
R_\mN  =  \Theta(d_1^{n_0^{(1)}\times n_0^{(2)}\times d_0}). 
\end{align} 
\end{thm}

\begin{rem}
Note that, by Theorem \ref{th:1}, $R_\mN$ grows at most as a polynomial of the number of neurons in the hidden layers, instead of growing exponentially fast, as suggested by Lemma~\ref{lem:naive_upper_bound}. This implies that the upper bound in Lemma~\ref{lem:naive_upper_bound} is too loose and may not be achieved in practice.    
\end{rem}

\section{Bounds for the Number of Linear Regions for Multi-layer CNNs}\label{sec: multi-layer}

In this section, we consider multi-layer CNNs and derive the lower and upper bounds for their maximal numbers of linear regions.  
First, we prove a lemma on the composition of two consecutive convolutional layers without an activation function layer between them. 
It is easy to see that such a composition is equivalent to a single convolutional layer. However, we could not find a precise description in the literature of this phenomenon concerning the relation of filter sizes and strides between these three convolutional layers. Therefore, we precisely describe this phenomenon and prove it in the following theorem. 
\begin{thm}\label{composition_convolutional}
Let $\mN$ be a two-layer CNN without activation layers. For $l=1,2$, there are $d_l$ filters with dimension $f_l^{(1)}\times f_l^{(2)}\times d_{l-1}$ and stride $s_l$ between neurons in the $(l-1)$-th and the $l$-th hidden layers. Then $\mN$ can be realized as a CNN with only one hidden convolutional layer such that its $d_2$ filters have size $f^{(1)}\times f^{(2)}\times d_{0}= (f_1^{(1)}+(f_2^{(1)}-1)s_1)\times (f_1^{(2)}+(f_2^{(2)}-1)s_1)\times d_{0} $ and stride $s=s_1s_2$.  
In particular, if $f_1^{(1)}=f_1^{(2)}=s_1=1$, we have $f^{(1)}\times f^{(2)}\times d_{0}= f_2^{(1)}\times f_2^{(2)}\times d_{0} $ and stride $s=s_2$. That is, if the first  convolutional layer has filter size $1\times 1\times d_0$ and stride $1$, then the composition of the two convolutional layers has the same filter size and stride as the second convolutional layer. 
\end{thm}

Now we are ready to derive the lower and upper bounds for the maximal numbers of linear regions of multi-layer CNNs using induction and the idea of identifying distinct linear regions motivated by  \cite{pascanu2013number,montufar2014number}.
\begin{thm}\label{th:multilayer}
Suppose that $\mN$ is a ReLU CNN with $L$ hidden convolutional layers. The input dimension is $n_0^{(1)} \times n_0^{(2)} \times d_0$; the $l$-th hidden layer has dimension $n_l^{(1)} \times n_l^{(2)} \times d_l$ for $1\leq l \leq L$; and there are $d_l$ filters with dimension $f_l^{(1)}\times f_l^{(2)}\times d_{l-1}$ and stride $s_l$ in the $l$-th layer. Assume that $d_l\geq d_0$ for each $1\leq l \leq L$. Then, we have

(i)  The maximal number $R_\mN$ of linear regions of $\mN$ is at least (lower bound)
\begin{align}\label{eq:10}
R_\mN \geq  R_{\mN'}\prod_{l=1}^{L-1}\left\lfloor\frac{d_l}{d_0}\right\rfloor^{n_l^{(1)} \times n_l^{(2)}\times d_0},
\end{align}
where  $\mN'$ is a one-layer ReLU CNN which has input dimension $n_{L-1}^{(1)} \times n_{L-1}^{(2)} \times d_0$ (the third dimension is $d_0$, not $d_{L-1}$), hidden layer dimension $n_L^{(1)} \times n_L^{(2)} \times d_L$, and $d_L$ filters with dimension $f_L^{(1)}\times f_L^{(2)}\times d_{0}$ and stride $s_L$. Note that the exact formula of $R_{\mN'}$ can be calculated by Eq. \eqref{eq:1}.

(ii)  The maximal number $R_\mN$ of linear regions of $\mN$ is at most (upper bound)
\begin{align}\label{eq:100}
R_\mN \leq
 R_{\mN''}\prod_{l=2}^{L}\sum_{i=0}^{n_0^{(1)} n_0^{(2)} d_0} \binom{n_l^{(1)}  n_l^{(2)}   d_l}{i},
\end{align}
where $\mN''$ is a one-layer ReLU CNN which has input dimension $n_{0}^{(1)} \times n_{0}^{(2)} \times d_0$, hidden layer dimension $n_1^{(1)} \times n_1^{(2)} \times d_1$, and  $d_1$ filters with dimension $f_1^{(1)}\times f_1^{(2)}\times d_{0}$ and stride~$s_1$.
\end{thm}

\begin{table*}[t!]
\caption{The upper and lower bounds for $R_\mN$ in Example \ref{ex:two layer}.}
\begin{center}
\resizebox{0.95\linewidth}{!}{
\begin{tabular}{c|c|c|c|c|c|c|c|c}
\hline
&$d_2=1$&$d_2=2$&$d_2=3$&$d_2=4$&$d_2=5$&$d_2=6$&$d_2=7$&$d_2=8$
\\
\hline
Upper bounds by Theorem \ref{th:multilayer} &220 & 880 & 3520 & 13585 & 46640 & 138050 & 356180 & 819115 
\\
\hline
Estimation of $R_\mN$ by sampling methods &170 & 261 & 685 &1186 &1796 &2725 &3398 &4822 
\\
\hline
Lower bounds by Theorem \ref{th:multilayer} & 32 & 120 & 320 & 680 & 1248 & 2072 & 3200 & 4680 
\\
\hline
\end{tabular}\label{tab:2}
}
\end{center}
\end{table*}
\begin{ex}\label{ex:two layer}
Let $\mN$ be a two-layer CNN such that the input dimension is $1\times 4\times 1$, there are $2$ filters with dimension $1\times 2\times 1$ and stride $1$ in the first hidden layer; and $d_2$ filters with dimension $1\times 2\times 2$ and stride $1$ in the second hidden layer. The dimensions of neurons in the first and second hidden layer are $1\times 3\times 2$ and $1\times 2\times d_2$ respectively. Theorem \ref{th:multilayer} yields the upper and lower bounds for $R_\mN$ as shown in Table \ref{tab:2}, which is compatible with the estimation of $R_\mN$ by sampling methods in our experiment. 
\end{ex}

\begin{ex} [Reduce to fully-connected ReLU NN case] 
Let $n_0^{(1)}=n_0^{(2)}=1$ and $s_l=f_l^{(1)}=f_l^{(2)}=n_l^{(1)}=n_l^{(2)}=1$ for each $1\leq l \leq L$. Then the CNN becomes a fully-connected ReLU NN. Under these assumptions, by Eq. \eqref{eq:1} and Theorem \ref{th:multilayer} we have 
\begin{align}\label{eq:upper_lower_bounds_NN}
 \sum_{k=0}^{d_{0}}\binom{d_L}{k} \times 
\prod_{l=1}^{L-1}\left\lfloor\frac{d_l}{d_0}\right\rfloor^{ d_0} \leq R_\mN \leq \prod_{l=1}^{L}\sum_{i=0}^{d_0} \binom{d_l}{i},
\end{align}
which is a well-known result for fully-connected ReLU NNs (see Theorem $4$ from \cite{montufar2014number} for the first inequality; see Proposition $3$ from \cite{montufar2017notes},   Theorem $1$ from \cite{raghu2017expressive} and Theorem $1$ from \cite{serra2018bounding} for the second inequality). Note that \eqref{eq:upper_lower_bounds_NN} implies that  $R_\mN=\Theta(d^{Ld_0})$ when $d_0$ is fixed and $d_1=d_2=\ldots = d_L =d \rightarrow +\infty$.  
\end{ex}

\section{Expressivity Comparison of Different Network Architectures}\label{sec: comparison}
In this section, we compare the expressivity of different network architectures in terms of the maximal number of linear regions based on the explicit formulas and bounds derived in Sections \ref{sec: one-layer} and \ref{sec: multi-layer}. The first conclusion is that deep CNNs usually have more expressivity than their shallow counterparts with the same number of parameters.
Furthermore, we compare ReLU CNNs with the fully-connected ReLU NNs with asymptotically the same number of parameters,  input dimension and  number of layers.
We show that CNNs have more expressivity than fully-connected NNs in this setting. 
\subsection{Deep CNNs v.s. Shallow CNNs}\label{subsec: compare_deep_shallow_CNN}
First, we calculate the number of parameters for CNNs. 
\begin{lem}
\label{prop:number_params_1}
Let $\mN$ be an $L$-layer ReLU CNN in Theorem \ref{th:multilayer} (ignoring the output layer for simplicity). Then, the number of parameters in $\mN$ is
$
 \sum_{l=1}^L \left(f_l^{(1)}\times f_l^{(2)}\times d_{l-1}\times d_l + d_l\right).
$
\end{lem}

Now we can derive the number of linear regions per parameter for deep and shallow CNNs. The next result follows directly from Theorem \ref{th: asy}, Theorem \ref{th:multilayer}  and Lemma~\ref{prop:number_params_1}.

\begin{thm}\label{th:asy_compare}
Let $\mN_1$ be an $L$-layer ReLU CNN in Theorem~\ref{th:multilayer} where $f_l^{(1)}$, $f_l^{(2)}=\O(1)$ for $1\leq l \leq L$, and $d_0=\O(1)$. When $d_1=d_2=\cdots=d_L=d$ tends to infinity, we obtain that $\mN_1$ has $\Theta (Ld^2)$ parameters, and the ratio of $R_{\mN_1}$ to the number of parameters of $\mN_1$ is 
$$
\frac{R_{\mN_1}}{\# \text{ parameters of } \mN_1}= 
\Omega \Bigl(\frac{1}{L} \cdot \left\lfloor\frac{d}{d_0}\right\rfloor^{  d_0 \sum_{l=1}^{L-1} n_l^{(1)} n_l^{(2)} -2  } \Bigr).
$$
For a one-layer ReLU CNN ${\mN_2}$ with input dimension $n_0^{(1)} \times n_0^{(2)} \times d_0$ and hidden layer dimension $n_1^{(1)} \times n_1^{(2)} \times Ld^2$, when $Ld^2$ tends to infinity, $\mN_2$ has $\Theta (Ld^2)$ parameters, and the ratio for $\mN_2$ is 
$$
\frac{R_{\mN_2}}{\# \text{ parameters of } \mN_2}= \O\left( \left(Ld^2 \right)^ { d_0 n_0^{(1)} n_0^{(2)} -1  }    \right).
$$
\end{thm}

By Theorem \ref{th:asy_compare} we will show that, with  asymptotically the same  number $\Theta(Ld^2)$ of parameters and the same number of input dimensions $n^2d_0$, deep CNNs can represent functions that have more number of linear regions than shallow CNNs. For simplicity, we set the stride $s_l=1$ for each layer, $n_0^{(1)} = n_0^{(2)}=n$ and $f_l^{(1)}=f_l^{(2)}=1$ in Theorem \ref{th:asy_compare} (in practice, filters with small sizes such as $3\times 3$, $5\times 5$ and $7\times 7$ are often adopted; for such cases, the conclusion is similar to the case $f_l^{(1)}=f_l^{(2)}=1$) in Theorem \ref{th:asy_compare}. Therefore, by Lemma \ref{lem:1} we have $n_l^{(1)} = n_l^{(2)} = n$ for each $1\leq l \leq L$. Then, the first ratio in Theorem \ref{th:asy_compare} is 
$$
\frac{R_{\mN_1}}{\# \text{ parameters of } \mN_1}= 
\Omega \left(\frac{1}{L} \cdot \left\lfloor\frac{d}{d_0}\right\rfloor^{  d_0 (L-1) n^2  -2  } \right),
$$
which grows at least exponentially fast with the number $L$ of hidden layers and polynomially fast with the depth $d$ of each hidden layer.

In contrast, the second ratio in Theorem \ref{th:asy_compare} grows at most polynomially fast with $L$ and $d$:
$$
\frac{R_{\mN_2}}{\# \text{ parameters of } \mN_2}= \O \left( (Ld^2)^ { d_0 n^{2} -1  }    \right).
$$
Therefore, we obtain that $R_{\mN_1}$ is far larger than $R_{\mN_2}$ when $L$ and $d$ are large enough.
By this we conclude that {\sl ReLU deep CNNs have much more expressivity than their shallow counterparts with asymptotically the same number of parameters and the same number of input dimensions}. 

\subsection{CNNs v.s. Fully-connected NNs}\label{subsec: compare_CNN_NN}
In this subsection, we compare the expressivity of ReLU CNNs and  fully-connected ReLU NNs with asymptotically the same number of parameters, input dimension and number of hidden layers. The settings for the $L$-layer ReLU CNN $\mN_1$ is the same as in Subsection \ref{subsec: compare_deep_shallow_CNN}. For an $L$-layer fully-connected ReLU NN $\mN_3$, we assume that the input dimension equals $n^2d_0$, and the number of neurons in each of the $L$ hidden layers equals $d_0$. Then $\mN_1$ and $\mN_3$ have asymptotically the  same number of parameters $\O(Ld^2)$, input dimension $n^2d_0$ and  number $L$ of layers. However, the maximal number of linear regions for $\mN_1$ is   
$$
R_{\mN_1}=  \Omega\left(  \left\lfloor\frac{d}{d_0}\right\rfloor^{ L d_0  n^2  }    \right)
$$
by \eqref{filter_one} and Theorem \ref{th:multilayer}. 
On the other hand, for $\mN_3$ we obtain 
\begin{align}
R_{\mN_3}&= \O \left( \binom{d}{n^2d_0}^{ L   }      \right) = \O  \left(\frac{d^{ L d_0  n^2  } }{(n^2d_0)!^L}   \right)
\nonumber
\\&
= \O  \left(\frac{d^{ L d_0  n^2  } }{(\sqrt{2\pi n^2d_0})^L (n^2d_0/e)^ {L d_0  n^2 } }   \right)     
\end{align}
by \eqref{eq:upper_lower_bounds_NN} and the Stirling's formula \cite{flajolet2009analytic}.
Therefore, 
$$
\frac{R_{\mN_1}}{R_{\mN_3}} \geq  \Omega\left(  (\sqrt{2\pi n^2d_0})^L (n^2/e)^ {L d_0  n^2 }   \right).
$$
When $n$ tends to infinity, the ratio $\frac{R_{\mN_1}}{R_{\mN_3}}$ also 
tends to infinity. Thus $R_{\mN_1}$ is much larger than $R_{\mN_3}$, and we conclude that {\sl ReLU CNNs have much more expressivity than the fully-connected ReLU NNs with asymptotically the same number of parameters, input dimension and number of layers}. 

\section{Experimental Settings} \label{sec: experiment}
We empirically validate our results by randomly sampling data points from the input space and determining which linear regions they belong to by Definition \ref{def:activation-regions}.
For a given CNN architecture, we initialize the parameters (weights and biases) based on the He initialization \cite{he2015delving}. Given the sampled weight, each data point in the input space is sampled from a normal distribution with mean $0$ and standard deviation $v$. 
We use $v$ ranging from $\{3, 5, 7, 9, 11, 13\}$ 
and report the maximal number of linear regions from such $v$. We sample $2\times 10^9$ data points in total, and for each data point, we determine which region it belongs to based on Definition \ref{def:activation-regions} (for a new data point $X^0$, we simply calculate the sign of $z(X^0,\theta)$ for each neuron $z$ and use it to determine whether $X^0$ belongs to a new region). This sampling method may skip some regions. Thus, the number of linear regions obtained by sampling is usually smaller than the exact number. However, when the number of sampling points is large enough, we can usually find almost all the linear regions. For example, we use this sampling method to find all $R_\mN$ linear regions for one-layer CNNs $\mN$ in Table \ref{tab:1}, and find the number of regions between the lower and upper bounds for two-layer CNNs in Table \ref{tab:2}.  By these, we validate the correctness of our results. {\sl We provide the codes for the experiments in the Supplementary Material}.

\section{Conclusion and Future Work}\label{sec: conclusion}
In this paper, we obtained exact formulas for the maximal and average number of linear regions of one-layer ReLU CNNs, and derived lower and upper bounds for multi-layer CNNs. By these results, we concluded that deep ReLU CNNs have more expressivity than their shallow counterparts, while ReLU CNNs have more expressivity than fully-connected ReLU NNs per parameter. 

To the best of our knowledge, our paper is the first work investigating the number of linear regions for CNNs. We plan to explore more aspects in the future based on this work. Possible future directions are summarized below. 

(1) In this paper, we only consider ReLU CNNs without pooling layers, fully-connected layers, and zero-padding for simplicity. After adding pooling layers, the functions represented by ReLU CNNs are still piecewise linear, thus the definition of linear regions still applies. It would be interesting to study the number of linear regions for ReLU CNNs with pooling layers, fully-connected layers, and zero-padding in the future.

(2) In Theorem \ref{th:1} we showed that the expectation of 
$R_{\mN,\theta}$ is equal to the maximal number $R_{\mN}$ for a one-layer ReLU CNN $\mN$. This result is consistent with the one-layer fully-connected NN case in \cite{hanin2019deep} (see the last two sentences of Section 2 and the first sentence in Remark 1 of \cite{hanin2019deep}). When the number of layers of the  fully-connected NN is at least two, it is proved in \cite{hanin2019deep} that the expectation of the number of linear regions is much smaller than the maximal number.  
It would be interesting to explore similar formulas for the expectation of 
$R_{\mN,\theta}$ for multi-layer ReLU CNNs.  
We believe this will be a more challenging topic than the fully-connected NN case due to the correlated weights and bias for CNNs.

(3) In Theorem \ref{th:multilayer} we derive lower and upper bounds for multi-layer CNNs. By Table \ref{tab:2} we can see that these bounds are not very close to each other. We would like to derive tighter bounds, or further exact formulas in the future.

(4) We would like to extend our method to study the changing number of linear regions for CNNs when the parameters are updated (for example, by backpropagation) with a small perturbation. When the parameters $\theta$ are replaced by some $\theta + \Delta\theta $, what is the relation between $R_{\mN,\theta}$ and $R_{\mN,\theta + \Delta\theta}$? For which parameters $\theta_1$ and $\theta_2$, the numbers $R_{\mN,\theta_1}$ and $R_{\mN,\theta_2}$ are equal to each other?  During the training process, the parameters $\theta$ changes to some $\theta + \Delta\theta $. Thus, the answer to the above question may help us have a better understanding of the training process and optimization for CNNs. In fact, recently, Hanin and Rolnick \cite{hanin2019complexity,hanin2019deep} have already done some research on the changing number of linear regions during the training process for fully-connected NNs.

(5) In \cite{hu2018nearly}, the ReLU activation was generalized to piecewise linear (PWL) functions. The results on exact formulas and bounds for the number of linear regions for fully-connected PWL NNs were presented. In the future, we plan to replace the ReLU activation with PWL functions for CNNs and study their numbers of linear regions.

\section*{Acknowledgements}
We really appreciate the valuable suggestions given by reviewers for improving the overall quality of this paper. We also would like to thank Dr. Jingtao Zang and Dr. Keqian Yan for helpful discussions.


\bibliography{number_region_CNN}

\begin{thebibliography}{40}
\providecommand{\natexlab}[1]{#1}
\providecommand{\url}[1]{\texttt{#1}}
\expandafter\ifx\csname urlstyle\endcsname\relax
  \providecommand{\doi}[1]{doi: #1}\else
  \providecommand{\doi}{doi: \begingroup \urlstyle{rm}\Url}\fi

\bibitem[Abdel-Hamid et~al.(2014)Abdel-Hamid, Mohamed, Jiang, Deng, Penn, and
  Yu]{abdel2014convolutional}
Abdel-Hamid, O., Mohamed, A.-r., Jiang, H., Deng, L., Penn, G., and Yu, D.
\newblock Convolutional neural networks for speech recognition.
\newblock \emph{IEEE/ACM Transactions on audio, speech, and language
  processing}, 22\penalty0 (10):\penalty0 1533--1545, 2014.

\bibitem[Arora et~al.(2016)Arora, Basu, Mianjy, and
  Mukherjee]{arora2016understanding}
Arora, R., Basu, A., Mianjy, P., and Mukherjee, A.
\newblock Understanding deep neural networks with rectified linear units.
\newblock \emph{arXiv preprint arXiv:1611.01491}, 2016.

\bibitem[Barron(1994)]{barron1994approximation}
Barron, A.~R.
\newblock Approximation and estimation bounds for artificial neural networks.
\newblock \emph{Machine learning}, 14\penalty0 (1):\penalty0 115--133, 1994.

\bibitem[Bianchini \& Scarselli(2014)Bianchini and
  Scarselli]{bianchini2014complexity}
Bianchini, M. and Scarselli, F.
\newblock On the complexity of neural network classifiers: A comparison between
  shallow and deep architectures.
\newblock \emph{IEEE transactions on neural networks and learning systems},
  25\penalty0 (8):\penalty0 1553--1565, 2014.

\bibitem[Brandfonbrener(2018)]{brandfonbrener2018expressive}
Brandfonbrener, D.
\newblock The expressive power of neural networks.
\newblock \emph{CPSC 490}, 2018.

\bibitem[Croce et~al.(2018)Croce, Andriushchenko, and Hein]{croce2018provable}
Croce, F., Andriushchenko, M., and Hein, M.
\newblock Provable robustness of relu networks via maximization of linear
  regions.
\newblock \emph{arXiv preprint arXiv:1810.07481}, 2018.

\bibitem[Cybenko(1989)]{cybenko1989approximation}
Cybenko, G.
\newblock Approximation by superpositions of a sigmoidal function.
\newblock \emph{Mathematics of control, signals and systems}, 2\penalty0
  (4):\penalty0 303--314, 1989.

\bibitem[Dumoulin \& Visin(2016)Dumoulin and Visin]{dumoulin2016guide}
Dumoulin, V. and Visin, F.
\newblock A guide to convolution arithmetic for deep learning.
\newblock \emph{arXiv preprint arXiv:1603.07285}, 2016.

\bibitem[Flajolet \& Sedgewick(2009)Flajolet and
  Sedgewick]{flajolet2009analytic}
Flajolet, P. and Sedgewick, R.
\newblock \emph{Analytic combinatorics}.
\newblock cambridge University press, 2009.

\bibitem[Funahashi(1989)]{funahashi1989approximate}
Funahashi, K.-I.
\newblock On the approximate realization of continuous mappings by neural
  networks.
\newblock \emph{Neural networks}, 2\penalty0 (3):\penalty0 183--192, 1989.

\bibitem[Glorot et~al.(2011)Glorot, Bordes, and Bengio]{glorot2011deep}
Glorot, X., Bordes, A., and Bengio, Y.
\newblock Deep sparse rectifier neural networks.
\newblock In \emph{Proceedings of the fourteenth international conference on
  artificial intelligence and statistics}, pp.\  315--323, 2011.

\bibitem[Goodfellow et~al.(2013)Goodfellow, Warde-Farley, Mirza, Courville, and
  Bengio]{goodfellow2013maxout}
Goodfellow, I.~J., Warde-Farley, D., Mirza, M., Courville, A., and Bengio, Y.
\newblock Maxout networks.
\newblock \emph{arXiv preprint arXiv:1302.4389}, 2013.

\bibitem[Hahnloser \& Seung(2001)Hahnloser and Seung]{hahnloser2001permitted}
Hahnloser, R.~H. and Seung, H.~S.
\newblock Permitted and forbidden sets in symmetric threshold-linear networks.
\newblock In \emph{Advances in neural information processing systems}, pp.\
  217--223, 2001.

\bibitem[Hahnloser et~al.(2000)Hahnloser, Sarpeshkar, Mahowald, Douglas, and
  Seung]{hahnloser2000digital}
Hahnloser, R.~H., Sarpeshkar, R., Mahowald, M.~A., Douglas, R.~J., and Seung,
  H.~S.
\newblock Digital selection and analogue amplification coexist in a
  cortex-inspired silicon circuit.
\newblock \emph{Nature}, 405\penalty0 (6789):\penalty0 947, 2000.

\bibitem[Hanin(2017)]{hanin2017universal}
Hanin, B.
\newblock Universal function approximation by deep neural nets with bounded
  width and relu activations.
\newblock \emph{arXiv preprint arXiv:1708.02691}, 2017.

\bibitem[Hanin \& Rolnick(2019{\natexlab{a}})Hanin and
  Rolnick]{hanin2019complexity}
Hanin, B. and Rolnick, D.
\newblock Complexity of linear regions in deep networks.
\newblock In \emph{International Conference on Machine Learning}, pp.\
  2596--2604, 2019{\natexlab{a}}.

\bibitem[Hanin \& Rolnick(2019{\natexlab{b}})Hanin and Rolnick]{hanin2019deep}
Hanin, B. and Rolnick, D.
\newblock Deep relu networks have surprisingly few activation patterns.
\newblock In \emph{Advances in Neural Information Processing Systems}, pp.\
  359--368, 2019{\natexlab{b}}.

\bibitem[Hanin \& Sellke(2017)Hanin and Sellke]{hanin2017approximating}
Hanin, B. and Sellke, M.
\newblock Approximating continuous functions by relu nets of minimal width.
\newblock \emph{arXiv preprint arXiv:1710.11278}, 2017.

\bibitem[He et~al.(2015)He, Zhang, Ren, and Sun]{he2015delving}
He, K., Zhang, X., Ren, S., and Sun, J.
\newblock Delving deep into rectifiers: Surpassing human-level performance on
  imagenet classification.
\newblock In \emph{Proceedings of the IEEE international conference on computer
  vision}, pp.\  1026--1034, 2015.

\bibitem[He et~al.(2016)He, Zhang, Ren, and Sun]{he2016deep}
He, K., Zhang, X., Ren, S., and Sun, J.
\newblock Deep residual learning for image recognition.
\newblock In \emph{Proceedings of the IEEE conference on computer vision and
  pattern recognition}, pp.\  770--778, 2016.

\bibitem[Hinton et~al.(2012)Hinton, Deng, Yu, Dahl, Mohamed, Jaitly, Senior,
  Vanhoucke, Nguyen, Kingsbury, et~al.]{hinton2012deep}
Hinton, G., Deng, L., Yu, D., Dahl, G., Mohamed, A.-r., Jaitly, N., Senior, A.,
  Vanhoucke, V., Nguyen, P., Kingsbury, B., et~al.
\newblock Deep neural networks for acoustic modeling in speech recognition.
\newblock \emph{IEEE Signal processing magazine}, 29, 2012.

\bibitem[Hornik(1991)]{hornik1991approximation}
Hornik, K.
\newblock Approximation capabilities of multilayer feedforward networks.
\newblock \emph{Neural networks}, 4\penalty0 (2):\penalty0 251--257, 1991.

\bibitem[Hu \& Zhang(2018)Hu and Zhang]{hu2018nearly}
Hu, Q. and Zhang, H.
\newblock Nearly-tight bounds on linear regions of piecewise linear neural
  networks.
\newblock \emph{arXiv preprint arXiv:1810.13192}, 2018.

\bibitem[Krizhevsky et~al.(2012)Krizhevsky, Sutskever, and
  Hinton]{krizhevsky2012imagenet}
Krizhevsky, A., Sutskever, I., and Hinton, G.~E.
\newblock Imagenet classification with deep convolutional neural networks.
\newblock In \emph{Advances in neural information processing systems}, pp.\
  1097--1105, 2012.

\bibitem[Lu et~al.(2017)Lu, Pu, Wang, Hu, and Wang]{lu2017expressive}
Lu, Z., Pu, H., Wang, F., Hu, Z., and Wang, L.
\newblock The expressive power of neural networks: A view from the width.
\newblock In \emph{Advances in neural information processing systems}, pp.\
  6231--6239, 2017.

\bibitem[Mont{\'u}far(2017)]{montufar2017notes}
Mont{\'u}far, G.
\newblock Notes on the number of linear regions of deep neural networks.
\newblock \emph{Sampling Theory Appl., Tallinn, Estonia, Tech. Rep}, 2017.

\bibitem[Montufar et~al.(2014)Montufar, Pascanu, Cho, and
  Bengio]{montufar2014number}
Montufar, G.~F., Pascanu, R., Cho, K., and Bengio, Y.
\newblock On the number of linear regions of deep neural networks.
\newblock In \emph{Advances in neural information processing systems}, pp.\
  2924--2932, 2014.

\bibitem[Pascanu et~al.(2013)Pascanu, Montufar, and Bengio]{pascanu2013number}
Pascanu, R., Montufar, G., and Bengio, Y.
\newblock On the number of response regions of deep feed forward networks with
  piece-wise linear activations.
\newblock \emph{arXiv preprint arXiv:1312.6098}, 2013.

\bibitem[Poole et~al.(2016)Poole, Lahiri, Raghu, Sohl-Dickstein, and
  Ganguli]{poole2016exponential}
Poole, B., Lahiri, S., Raghu, M., Sohl-Dickstein, J., and Ganguli, S.
\newblock Exponential expressivity in deep neural networks through transient
  chaos.
\newblock In \emph{Advances in neural information processing systems}, pp.\
  3360--3368, 2016.

\bibitem[Raghu et~al.(2017)Raghu, Poole, Kleinberg, Ganguli, and
  Dickstein]{raghu2017expressive}
Raghu, M., Poole, B., Kleinberg, J., Ganguli, S., and Dickstein, J.~S.
\newblock On the expressive power of deep neural networks.
\newblock In \emph{Proceedings of the 34th International Conference on Machine
  Learning-Volume 70}, pp.\  2847--2854. JMLR. org, 2017.

\bibitem[Sainath et~al.(2013)Sainath, Mohamed, Kingsbury, and
  Ramabhadran]{sainath2013deep}
Sainath, T.~N., Mohamed, A.-r., Kingsbury, B., and Ramabhadran, B.
\newblock Deep convolutional neural networks for lvcsr.
\newblock In \emph{2013 IEEE international conference on acoustics, speech and
  signal processing}, pp.\  8614--8618. IEEE, 2013.

\bibitem[Serra \& Ramalingam(2018)Serra and Ramalingam]{serra2018empirical}
Serra, T. and Ramalingam, S.
\newblock Empirical bounds on linear regions of deep rectifier networks.
\newblock \emph{arXiv preprint arXiv:1810.03370}, 2018.

\bibitem[Serra et~al.(2018)Serra, Tjandraatmadja, and
  Ramalingam]{serra2018bounding}
Serra, T., Tjandraatmadja, C., and Ramalingam, S.
\newblock Bounding and counting linear regions of deep neural networks.
\newblock In \emph{International Conference on Machine Learning}, pp.\
  4565--4573, 2018.

\bibitem[Silver et~al.(2016)Silver, Huang, Maddison, Guez, Sifre, Van
  Den~Driessche, Schrittwieser, Antonoglou, Panneershelvam, Lanctot,
  et~al.]{silver2016mastering}
Silver, D., Huang, A., Maddison, C.~J., Guez, A., Sifre, L., Van Den~Driessche,
  G., Schrittwieser, J., Antonoglou, I., Panneershelvam, V., Lanctot, M.,
  et~al.
\newblock Mastering the game of go with deep neural networks and tree search.
\newblock \emph{nature}, 529\penalty0 (7587):\penalty0 484, 2016.

\bibitem[Simonyan \& Zisserman(2014)Simonyan and Zisserman]{simonyan2014very}
Simonyan, K. and Zisserman, A.
\newblock Very deep convolutional networks for large-scale image recognition.
\newblock \emph{arXiv preprint arXiv:1409.1556}, 2014.

\bibitem[Stanley(2004)]{Stanley04anintroduction}
Stanley, R.~P.
\newblock An introduction to hyperplane arrangements.
\newblock In \emph{Lecture notes, IAS/Park City Mathematics Institute}, 2004.

\bibitem[Szegedy et~al.(2015)Szegedy, Liu, Jia, Sermanet, Reed, Anguelov,
  Erhan, Vanhoucke, and Rabinovich]{szegedy2015going}
Szegedy, C., Liu, W., Jia, Y., Sermanet, P., Reed, S., Anguelov, D., Erhan, D.,
  Vanhoucke, V., and Rabinovich, A.
\newblock Going deeper with convolutions.
\newblock In \emph{Proceedings of the IEEE conference on computer vision and
  pattern recognition}, pp.\  1--9, 2015.

\bibitem[Telgarsky(2015)]{telgarsky2015representation}
Telgarsky, M.
\newblock Representation benefits of deep feedforward networks.
\newblock \emph{arXiv preprint arXiv:1509.08101}, 2015.

\bibitem[Telgarsky(2016)]{telgarsky2016benefits}
Telgarsky, M.
\newblock Benefits of depth in neural networks.
\newblock \emph{arXiv preprint arXiv:1602.04485}, 2016.

\bibitem[Zaslavsky(1975)]{zaslavsky1975facing}
Zaslavsky, T.
\newblock \emph{Facing Up to Arrangements: Face-Count Formulas for Partitions
  of Space by Hyperplanes}.
\newblock Number 154 in Memoirs of the American Mathematical Society. American
  Mathematical Society, 1975.
\newblock URL \url{https://books.google.ae/books?id=2DUZAQAAIAAJ}.

\end{thebibliography}


\begin{thebibliography}{4}
\providecommand{\natexlab}[1]{#1}
\providecommand{\url}[1]{\texttt{#1}}
\expandafter\ifx\csname urlstyle\endcsname\relax
  \providecommand{\doi}[1]{doi: #1}\else
  \providecommand{\doi}{doi: \begingroup \urlstyle{rm}\Url}\fi

\bibitem[Lojasiewicz(1964)]{lojasiewicz1964triangulation}
Lojasiewicz, S.
\newblock Triangulation of semi-analytic sets.
\newblock \emph{Annali della Scuola Normale Superiore di Pisa-Classe di
  Scienze}, 18\penalty0 (4):\penalty0 449--474, 1964.

\bibitem[Shafarevich \& Remizov(2012)Shafarevich and
  Remizov]{shafarevich2012linear}
Shafarevich, I.~R. and Remizov, A.~O.
\newblock \emph{Linear algebra and geometry}.
\newblock Springer Science \& Business Media, 2012.

\bibitem[Stanley(2004)]{Stanley04anintroduction}
Stanley, R.~P.
\newblock An introduction to hyperplane arrangements.
\newblock In \emph{Lecture notes, IAS/Park City Mathematics Institute}, 2004.

\bibitem[Zaslavsky(1975)]{zaslavsky1975facing}
Zaslavsky, T.
\newblock \emph{Facing Up to Arrangements: Face-Count Formulas for Partitions
  of Space by Hyperplanes}.
\newblock Number 154 in Memoirs of the American Mathematical Society. American
  Mathematical Society, 1975.
\newblock URL \url{https://books.google.ae/books?id=2DUZAQAAIAAJ}.

\end{thebibliography}
\bibliographystyle{icml2020}

\end{document}


\onecolumn

\icmltitle{Supplementary Material for ``On the Number of Linear Regions of Convolutional Neural Networks"}



\begin{icmlauthorlist}
\icmlauthor{Huan Xiong}{mbzuai}
\icmlauthor{Lei Huang}{iiai}
\icmlauthor{Mengyang Yu}{iiai}
\icmlauthor{Li Liu}{iiai}
\icmlauthor{Fan Zhu}{iiai}
\icmlauthor{Ling Shao}{mbzuai,iiai}
\end{icmlauthorlist}

\icmlaffiliation{iiai}{Inception Institute of Artificial Intelligence,
  Abu Dhabi, UAE}
\icmlaffiliation{mbzuai}{Mohamed bin Zayed University of Artificial Intelligence, UAE}

\icmlcorrespondingauthor{Huan Xiong}{huan.xiong@mbzuai.ac.ae}

\icmlkeywords{Machine Learning, ICML}

\vskip 0.3in



\printAffiliationsAndNotice{}  



\section{Preliminary on Hyperplane Arrangements} 
In this section, we recall some basic knowledge on hyperplane arrangements \cite{zaslavsky1975facing,Stanley04anintroduction}, which will be used in the proofs of theorems in this paper.  
An affine hyperplane in a Euclidean space $V \simeq \mathbb{R}^n$ is a subspace with the following form:   $H = \{X \in V: \alpha \cdot X=b\}$,
where $``\cdot"$ denotes the inner product, $\0\neq \alpha\in V$ is called the {\em norm vector} of $H$, and $b\in \mathbb{R}$. For example, when $V = \mathbb{R}^n$, an affine hyperplane has the following form: $\{(x_1,x_2,\ldots,x_n)\in \mathbb{R}^n: \sum_{i=1}^n a_ix_i=b\}$ where $a_i,b\in\mathbb{R}$ and there exists some $i$ with $a_i\neq 0$.  A finite {\em hyperplane arrangement} $\mA$ of a Euclidean space $V$ is a finite set of affine hyperplanes in $V$. 
A {\em region} of an arrangement $\mA = \{H_i\subset V:1\leq i\leq m\}$ is defined as a connected component of $V\setminus (\cup_{i=1}^m H_i)$, which is a connected
component of the complement of the union of the hyperplanes in $\mA$. Let $r(\mA)$ denote the number of regions for an arrangement $\mA$.  It is natural to ask: What is the maximal number of regions for an arrangement with $m$ hyperplanes in $\mathbb{R}^n$? The following Zaslavsky's theorem answers this question. 

\begin{prop}[Zaslavsky's Theorem \cite{zaslavsky1975facing,Stanley04anintroduction}]\label{thm:ZaslavskyNN}
    Let $\mA = \{H_i\subset V:1\leq i\leq m\}$ be an arrangement in
    $\R^{n}$. Then, the number of regions for the arrangement $\mA$ satisfies
    \begin{eqnarray} \label{eq:region_general1}
        r(\mA)\leq\sum_{i=0}^{n} \binom{m}{i}.
\end{eqnarray} 
Furthermore, the above equality holds iff $\mA$ is in general position, i.e.,   (i) $\dim(\bigcap_{j=1}^k H_{i_j}) = n-k$ for any $k\leq n$ and $1\leq i_1<i_2<\cdots< i_j \leq m$; (ii)~~
$\bigcap_{j=1}^k H_{i_j}=\emptyset$~ for any $k> n$ and $1\leq i_1<i_2<\cdots< i_j \leq m$.
\end{prop}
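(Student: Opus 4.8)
The plan is to induct on the number $m$ of hyperplanes using the elementary \emph{deletion--restriction} recursion for region counts; the inequality then drops out of Pascal's rule, and the equality case from recording exactly when each inequality used in the induction is tight. For the recursion, write $\mA=\mA'\cup\{H_m\}$ with $\mA'=\{H_1,\dots,H_{m-1}\}$, and let $\mA^{H_m}$ be the arrangement induced on $H_m\simeq\R^{n-1}$ by the traces $H_i\cap H_m$ for $1\le i\le m-1$, discarding the empty ones. Using that every region of an arrangement is convex (an intersection of half-spaces), a region $R$ of $\mA'$ is either disjoint from $H_m$, and then remains a single region of $\mA$, or meets $H_m$, and then is split by $H_m$ into exactly two regions of $\mA$; moreover the sets $R\cap H_m$ coming from the regions $R$ of $\mA'$ that meet $H_m$ are precisely the regions of $\mA^{H_m}$. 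Hence
\begin{equation*}
 r(\mA) \;=\; r(\mA') + r(\mA^{H_m}).
\end{equation*}

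For the bound, the base case $m=0$ gives $r(\mA)=1=\sum_{i=0}^n\binom{0}{i}$. Inductively, $\mA'$ has $m-1$ hyperplanes in $\R^n$ and $\mA^{H_m}$ has at most $m-1$ hyperplanes in $\R^{n-1}$, so
\begin{equation*}
 r(\mA)\;\le\;\sum_{i=0}^{n}\binom{m-1}{i}+\sum_{i=0}^{n-1}\binom{m-1}{i}
 \;=\;\binom{m-1}{0}+\sum_{i=1}^{n}\Bigl(\binom{m-1}{i}+\binom{m-1}{i-1}\Bigr)
 \;=\;\sum_{i=0}^{n}\binom{m}{i},
\end{equation*}
using $\binom{m}{0}=\binom{m-1}{0}$ and Pascal's identity. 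This proves \eqref{eq:region_general1}.

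For the equality case, equality above forces simultaneously: (a) $r(\mA')$ is extremal, hence $\mA'$ is in general position in $\R^n$ by induction; (b) $\mA^{H_m}$ has exactly $m-1$ distinct hyperplanes — equivalently, $H_m$ meets every $H_i$ and the flats $H_i\cap H_m$ are pairwise distinct — since otherwise $r(\mA^{H_m})$ falls below the extremal value for $m-1$ hyperplanes in $\R^{n-1}$; and (c) $r(\mA^{H_m})$ is extremal, hence $\mA^{H_m}$ is in general position in $H_m$. One then checks that (a)+(b)+(c) is equivalent to conditions (i)--(ii) for $\mA$: for ``$\Leftarrow$'', (a) is immediate, the cases $k=2,3$ of (i) give $\dim(H_i\cap H_m)=n-2$ with the $H_i\cap H_m$ distinct, hence (b), and translating (i)--(ii) to the traces gives (c); for ``$\Rightarrow$'', an intersection $\bigcap_{j=1}^{k}H_{i_j}$ with $m\notin\{i_j\}$ is controlled by (a), while if $m=i_k$ then $\bigcap_{j=1}^{k}H_{i_j}=\bigcap_{j=1}^{k-1}(H_{i_j}\cap H_m)$ inside $H_m\simeq\R^{n-1}$, whose dimension (for $k\le n$) or emptiness (for $k>n$) is governed by (b) and (c). Since the general-position property is symmetric in the hyperplanes, singling out $H_m$ as the last one is harmless. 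I expect the main obstacle to be precisely this last bookkeeping: verifying that conditions (i)--(ii) for $\mA$ in $\R^n$ are exactly equivalent to $\mA'$ being in general position in $\R^n$, \emph{together with} the transversality statement (b) for $H_m$, \emph{together with} $\mA^{H_m}$ being in general position in $\R^{n-1}$. The recursion and the Pascal-rule computation are routine.
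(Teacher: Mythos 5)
Your proof is correct, and it is worth noting that the paper does not actually prove Proposition~\ref{thm:ZaslavskyNN}: it is quoted as a classical theorem, and the tools assembled around it (Lemma~\ref{lem:recursive}, the deletion--restriction recursion, and Lemma~\ref{lem:equality_hold}, the signed sum over central subarrangements) are deployed only to prove the generalization in Proposition~\ref{thm:ZaslavskyCNN}. Measured against that proof, your handling of the inequality is the same: the identical recursion $r(\mA)=r(\mA\setminus\{H_m\})+r(\mA^{H_m})$, induction on the number of hyperplanes, and Pascal's identity (your convexity argument in fact supplies a proof of Lemma~\ref{lem:recursive}, which the paper only cites). The genuine divergence is in the equality case. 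The paper obtains equality from Lemma~\ref{lem:equality_hold}: under the general-position hypotheses every central subarrangement $\mB$ satisfies $\#\mB=\rank(\mB)$, so each contributes $+1$ and one simply counts central subarrangements; this yields sufficiency of general position very cleanly but says nothing about necessity --- and indeed Proposition~\ref{thm:ZaslavskyCNN} only asserts an implication, while Proposition~\ref{thm:ZaslavskyNN} asserts an equivalence. Your route, forcing both summands of the recursion to be extremal and translating that into your conditions (a), (b), (c), is what delivers the ``only if'' direction, at the price of the transversality bookkeeping you flag; that bookkeeping does close as you sketch it (strict monotonicity of $m'\mapsto\sum_{i=0}^{n-1}\binom{m'}{i}$ for $n\ge 2$ is what forces your condition (b), and the low-dimensional base cases are trivial since distinct points on a line are automatically in general position). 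In short: same engine as the paper's Proposition~\ref{thm:ZaslavskyCNN} for the bound, but a different and more informative argument for the equality characterization, which is needed if one wants the full ``iff''.
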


For example, if $n=2$ then a set of lines is in general position if no two are parallel
and no three meet at a point. In this case, the number of regions of an arrangement $\mA$
with $m$ lines in general position is equal to 
\begin{equation}
    \label{eq:region_2d}
r(\mA) = 
\binom{m}{2} + m + 1. 
\end{equation}
For an arrangement $\mA$ and some $H_0\in \mA$, we define 
\[
\mA^{H_0}:= \{H\cap H_0: H\in \mA, H\neq H_0,~  H\cap H_0\neq \emptyset\}
\]
to be the set of nonempty intersections of $H_0$ and other hyperplanes in $\mA$. 
The following lemma gives a recursive method to compute $r(\mA)$. 
\begin{lem}[Lemma 2.1 from {\cite{Stanley04anintroduction}}]\label{lem:recursive}
 Let $\mA$ be an arrangement and  $H_0\in \mA$. Then we have 
\[
r(\mA) = r(\mA\setminus\{H_0\}) + r(\mA^{H_0}).
\]
\end{lem}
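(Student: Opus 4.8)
The plan is to relate the regions of $\mA$ to those of $\mA' := \mA\setminus\{H_0\}$ by tracking how the single extra hyperplane $H_0$ cuts each region of $\mA'$. First I would record the structural fact that
\[
V\setminus\textstyle\bigcup_{H\in\mA}H \;=\; \bigl(V\setminus\textstyle\bigcup_{H\in\mA'}H\bigr)\setminus H_0,
\]
and since each region $R$ of $\mA'$ is open, the regions of $\mA$ are exactly the connected components of the sets $R\setminus H_0$, as $R$ ranges over the regions of $\mA'$. Hence $r(\mA) = \sum_{R}c(R)$, where $c(R)$ denotes the number of connected components of $R\setminus H_0$.

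The second step is to observe that every region $R$ of $\mA'$ is convex, being an intersection of open half-spaces (one for each hyperplane of $\mA'$, chosen on the side of $R$). Consequently, if $R\cap H_0=\emptyset$ then $R\setminus H_0 = R$ is connected and $c(R)=1$; while if $R\cap H_0\neq\emptyset$ then, picking $p\in R\cap H_0$ and using that $R$ is open, $R$ meets both open half-spaces bounded by $H_0$, each piece is convex hence connected, so $c(R)=2$. This already gives
\[
r(\mA) \;=\; r(\mA') \;+\; \#\{\, R \text{ a region of }\mA' : R\cap H_0\neq\emptyset \,\}.
\]

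The third step identifies that last count with $r(\mA^{H_0})$. Intersecting the disjoint decomposition $V\setminus\bigcup_{H\in\mA'}H = \bigsqcup_R R$ with $H_0$ yields $H_0\setminus\bigcup_{H\in\mA'}H = \bigsqcup_R (R\cap H_0)$, a disjoint union of sets that are open in $H_0$ and convex, hence connected; therefore the nonempty ones among the $R\cap H_0$ are precisely the connected components of $H_0\setminus\bigcup_{H\in\mA'}H$. Since a point of $H_0$ lies on $H$ iff it lies on $H\cap H_0$, this set equals $H_0\setminus\bigcup_{H'\in\mA^{H_0}}H'$, whose number of connected components is by definition $r(\mA^{H_0})$. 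Comparing the two counts and substituting into the previous display completes the proof.

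I expect the only place needing real care to be the third step: one must check that the convex pieces $R\cap H_0$ are genuinely the connected components of $H_0$ minus the remaining hyperplanes — which rests on convexity (for connectedness) together with openness in $H_0$ (so that a disjoint union of such sets has exactly those sets as its components) — and, in the second step, the small but essential point that an open convex set meeting a hyperplane meets both of its open sides, so that it is cut into exactly two pieces. Everything else is formal set algebra with complements and disjoint unions.
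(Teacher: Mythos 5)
Your argument is correct and complete: the decomposition of $r(\mA)$ into regions of $\mA\setminus\{H_0\}$ not meeting $H_0$ (each contributing one region) plus those meeting $H_0$ (each split into exactly two convex pieces, and in bijection with the regions of $\mA^{H_0}$) is exactly the standard deletion--restriction proof of this lemma. The paper itself states this result without proof, citing Stanley's lecture notes, and your argument coincides with the one given there, with the key points (convexity of regions, and that an open convex set meeting $H_0$ meets both open sides) correctly identified and justified.
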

Lemma \ref{lem:recursive} means that we can calculate the number of regions of an arrangement by induction.

Let $\#\mA$ be the number of hyperplanes in $\mA$ and $\rank(\mA)$ be the dimension of the space spanned by the normal vectors of the hyperplanes in $\mA$. An arrangement $\mA$ is called {\em central} if $\bigcap_{H\in\mA}  H\neq\emptyset$. 

\begin{lem}[{Theorems 2.4 and 2.5 from \cite{Stanley04anintroduction}}]\label{lem:equality_hold}
Let $\mA$ be an arrangement in an $n$-dimensional vector space. Then we have  \[ r(\mA) = \sum_{\substack{\mB\subseteq\mA\\  \mB \text{ central}} } (-1)^{\#\mB-\rank(\mB)}. \] 
\end{lem}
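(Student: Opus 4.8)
The plan is to prove $r(\mA)=f(\mA)$, where $f(\mA):=\sum_{\mathcal B\subseteq\mA,\ \mathcal B\text{ central}}(-1)^{\#\mathcal B-\rank(\mathcal B)}$ denotes the right-hand side, by complete induction on $\#\mA$, using only Lemma~\ref{lem:recursive}. Throughout, $n$ is the dimension of the ambient space; for a subfamily $\mathcal C$ of the restricted arrangement $\mA^{H_0}$, which lives in the $(n-1)$-dimensional affine space $H_0$, we write $\rank_{H_0}(\mathcal C)$ for the rank computed inside $H_0$, so that $f(\mA^{H_0})=\sum_{\mathcal C\subseteq\mA^{H_0},\ \mathcal C\text{ central}}(-1)^{\#\mathcal C-\rank_{H_0}(\mathcal C)}$. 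Induction on $\#\mA$ is legitimate even though restriction lowers the dimension, since $\#(\mA\setminus\{H_0\})=\#\mA-1$ and $\#\mA^{H_0}\le\#\mA-1$ for every $H_0\in\mA$. The base case $\mA=\emptyset$ is immediate: $r(\emptyset)=1$, and the only central subfamily of $\emptyset$ is $\emptyset$ itself, with $\#\emptyset=\rank(\emptyset)=0$, so $f(\emptyset)=1$.

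For the inductive step, fix $H_0\in\mA$. By Lemma~\ref{lem:recursive}, $r(\mA)=r(\mA\setminus\{H_0\})+r(\mA^{H_0})$, and the induction hypothesis identifies the two summands with $f(\mA\setminus\{H_0\})$ and $f(\mA^{H_0})$. It therefore suffices to prove that $f$ obeys the same recursion,
\[
f(\mA)=f(\mA\setminus\{H_0\})+f(\mA^{H_0}).
\]
Since $f(\mA)-f(\mA\setminus\{H_0\})$ collects exactly the terms coming from central subfamilies $\mathcal B\subseteq\mA$ with $H_0\in\mathcal B$, the entire problem boils down to the identity
\[
\sum_{\substack{\mathcal B\subseteq\mA,\ \mathcal B\text{ central}\\ H_0\in\mathcal B}}(-1)^{\#\mathcal B-\rank(\mathcal B)}\;=\;\sum_{\substack{\mathcal C\subseteq\mA^{H_0}\\ \mathcal C\text{ central}}}(-1)^{\#\mathcal C-\rank_{H_0}(\mathcal C)}.
\]

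To prove this I would group the central subfamilies $\mathcal B\ni H_0$ by their ``restriction'' $\mathcal C:=\{H\cap H_0:H\in\mathcal B\setminus\{H_0\}\}$. This is well defined and $\mathcal C$ is a central subfamily of $\mA^{H_0}$: centrality of $\mathcal B$ gives $\emptyset\neq\bigcap\mathcal B\subseteq H\cap H_0$ for each $H\in\mathcal B\setminus\{H_0\}$, so every $H\cap H_0$ lies in $\mA^{H_0}$, and $\bigcap\mathcal C=\bigcap\mathcal B\neq\emptyset$. Two elementary observations are then needed. First, for a central family the rank equals the codimension of the intersection, so $\rank(\mathcal B)=n-\dim\bigcap\mathcal B=\bigl((n-1)-\dim\bigcap\mathcal C\bigr)+1=\rank_{H_0}(\mathcal C)+1$; hence $\rank(\mathcal B)$ is constant over the fibre of a given $\mathcal C$. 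Second, for a central $\mathcal C=\{G_1,\dots,G_j\}\subseteq\mA^{H_0}$, a subfamily $\mathcal B=\{H_0\}\cup\mathcal B'$ restricts to $\mathcal C$ precisely when $\mathcal B'$ is a union, over $i=1,\dots,j$, of a \emph{nonempty} subset of the cluster $\{H\in\mA\setminus\{H_0\}:H\cap H_0=G_i\}$, with nothing outside these clusters; and any such $\mathcal B$ is automatically central, again because $\bigcap\mathcal B=\bigcap\mathcal C$. Writing $n_i\ge1$ for the size of the $i$-th cluster and using $\sum_{\emptyset\neq S\subseteq[n_i]}(-1)^{\#S}=(1-1)^{n_i}-1=-1$, the signed count over the fibre of $\mathcal C$ is
\[
\sum_{\mathcal B\mapsto\mathcal C}(-1)^{\#\mathcal B}=(-1)\prod_{i=1}^{j}\bigl((1-1)^{n_i}-1\bigr)=(-1)(-1)^{j}=(-1)^{\#\mathcal C+1},
\]
so that fibre contributes $(-1)^{\rank_{H_0}(\mathcal C)+1}\cdot(-1)^{\#\mathcal C+1}=(-1)^{\#\mathcal C-\rank_{H_0}(\mathcal C)}$ to the left-hand side; summing over all central $\mathcal C\subseteq\mA^{H_0}$ gives the displayed identity and closes the induction. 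The crux—and the step most likely to need care—is exactly this fibre computation: several distinct hyperplanes of $\mA$ may restrict to one and the same hyperplane of $\mA^{H_0}$, and the argument works precisely because the alternating sum over the nonempty choices within each cluster collapses all of these multiplicities to a single sign. The remaining ingredients (the base case, the rank bookkeeping, and the appeal to Lemma~\ref{lem:recursive}) are routine.
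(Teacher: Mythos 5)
Your proof is correct. The paper itself gives no argument for this lemma---it is imported by citation from Stanley's notes, where the standard route is indirect: one first proves Whitney's theorem, $\chi_{\mA}(t)=\sum_{\mB\text{ central}}(-1)^{\#\mB}t^{n-\rank(\mB)}$, via the M\"obius function of the intersection poset, and then Zaslavsky's evaluation $r(\mA)=(-1)^n\chi_{\mA}(-1)$ via deletion--restriction applied to $\chi_{\mA}$. You instead verify directly that the Whitney-type sum $f(\mA)$ satisfies the same recursion $f(\mA)=f(\mA\setminus\{H_0\})+f(\mA^{H_0})$ as $r(\mA)$ from Lemma~\ref{lem:recursive}, which collapses the two cited theorems into one induction on $\#\mA$ and avoids the intersection lattice entirely. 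The delicate point, which you handle correctly, is the fibre computation: several hyperplanes of $\mA$ may restrict to the same hyperplane of $\mA^{H_0}$, the subfamilies $\mB\ni H_0$ restricting to a fixed central $\mathcal C$ are exactly those choosing a nonempty subset of each cluster (all of which are automatically central with $\bigcap\mB=\bigcap\mathcal C$), and the alternating sum $\sum_{\emptyset\neq S\subseteq[n_i]}(-1)^{\#S}=-1$ reduces each cluster to a single sign; combined with the constancy of $\rank(\mB)=\rank_{H_0}(\mathcal C)+1$ on the fibre, this yields exactly $(-1)^{\#\mathcal C-\rank_{H_0}(\mathcal C)}$ per fibre. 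Your argument is self-contained given Lemma~\ref{lem:recursive} and is, if anything, more elementary than the cited one; the only cost is that it does not produce the full characteristic polynomial, which the paper does not need anyway.
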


\section{Proofs of Results for One-Layer CNNs}
Let $[n,m]:=\{n,n+1,n+2\ldots,m\}$ be the set of integers from $n$ to $m$ and $[m]:=[1,m]=\{1,2,\ldots,m\}$.
We establish the following generalization of Zaslavsky's theorem, which is crucial in the proof of Theorem $2$. 
\begin{prop}\label{thm:ZaslavskyCNN}
Let $V=\mathbb{R}^n$, $V_1,V_2,\ldots,V_m$ be $m$ nonempty subspaces of $V$, and $n_1,n_2,\ldots, n_m \in \mathbb{N}$ be some nonnegative integers. 
Let $\mA = \{H_{k,j}:1\leq k\leq m,~~ 1\leq j \leq n_k\}$ be an arrangement in $\R^{n}$ with $H_{k,j} = \{X \in V: \alpha_{k,j} \cdot X=b_{k,j}\}$ where  $\0\neq\alpha_{k,j} \in V_k,~ b_{k,j}\in\mathbb{R}$. Then, the number of regions for the arrangement $\mA$ satisfies 
\begin{eqnarray} \label{eq:region_general}
        r(\mA)\leq\sum_{(i_1,i_2,\ldots, i_m) \in K_{V;V_1,V_2,\ldots,V_m} }~~\prod_{k=1}^m \binom{n_k}{i_k},
\end{eqnarray} 
where 
\[
K_{V;V_1,V_2,\ldots,V_m}=\left\{(i_1,i_2,\ldots, i_m):i_k\in\mathbb{N}, ~~\sum_{k\in J} i_k \leq \dim\left(\sum_{k\in J} V_k\right) \forall J\subseteq [m]\right\}.
\]
Furthermore, assume that the following two conditions hold for the arrangement~$\mA$: 

(i) For each $(i_1,i_2,\ldots, i_m) \in K_{V;V_1,V_2,\ldots,V_m}$, any $\sum_{k=1}^m i_k$ vectors with $i_k$ distinct vectors chosen from the set $\{\alpha_{k,j}: 1\leq j \leq n_k\}$ are linear independent;

(ii) For each $(i_1,i_2,\ldots, i_m) \in \mathbb{N}^m\setminus K_{V;V_1,V_2,\ldots,V_m}$,  the intersection of any $\sum_{k=1}^m i_k$ hyperplanes with $i_k$ distinct hyperplanes chosen from the set $\{H_{k,j}: 1\leq j \leq n_k\}$ are empty.

Then, the equality in \eqref{eq:region_general} holds: \begin{eqnarray} \label{eq:region_general_equality}        r(\mA)=\sum_{(i_1,i_2,\ldots, i_m) \in K_{V;V_1,V_2,\ldots,V_m} }~~\prod_{k=1}^m \binom{n_k}{i_k}. \end{eqnarray}
\end{prop}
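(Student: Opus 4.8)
The plan is to mimic the classical proof of Zaslavsky's theorem (Proposition~\ref{thm:ZaslavskyNN}) by a double induction — on the ambient dimension $n$ and on the total number of hyperplanes $N=\sum_{k=1}^m n_k$ — using the deletion–restriction recursion of Lemma~\ref{lem:recursive}. For the \emph{inequality} \eqref{eq:region_general}, pick a hyperplane, say $H_0=H_{m,n_m}$ (assume WLOG $n_m\geq 1$), and write $r(\mA)=r(\mA\setminus\{H_0\})+r(\mA^{H_0})$. The term $r(\mA\setminus\{H_0\})$ is bounded by induction on $N$ with the same subspaces $V_1,\dots,V_m$ but with $n_m$ replaced by $n_m-1$. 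The restriction $\mA^{H_0}$ lives inside $H_0\simeq\mathbb{R}^{n-1}$; each hyperplane $H_{k,j}\cap H_0$ (when nonempty) is cut out by the image of $\alpha_{k,j}$ under orthogonal projection onto the direction space of $H_0$, so $\mA^{H_0}$ is an arrangement of the same type with subspaces $V_k' = \pi(V_k)$ (the projection of $V_k$ into the linear part of $H_0$), and with $n_m$ replaced by $n_m-1$. Applying the inductive hypothesis to $\mA^{H_0}$ and then matching up the resulting sum of products of binomials with the difference of the two master sums on $n_k$'s and $n_m$ vs.\ $n_m-1$ is the bookkeeping core of the argument; the Vandermonde-type identity $\binom{n_m}{i_m}=\binom{n_m-1}{i_m}+\binom{n_m-1}{i_m-1}$ together with the observation that $\dim(\sum_{k\in J}V_k')\geq \dim(\sum_{k\in J}V_k)-1$ (with equality exactly when $J$ is "active'' in $H_0$) should make the two sides line up.

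For the \emph{equality} direction under conditions (i) and (ii), the cleanest route is probably not the recursion but Lemma~\ref{lem:equality_hold}: $r(\mA)=\sum_{\mB\subseteq\mA,\ \mB\text{ central}}(-1)^{\#\mB-\mathrm{rank}(\mB)}$. I would first argue that, under (i) and (ii), a subset $\mB\subseteq\mA$ — say containing $i_k$ of the hyperplanes of type $k$ — is central if and only if $(i_1,\dots,i_m)\in K_{V;V_1,\dots,V_m}$, and that in that case $\mathrm{rank}(\mB)=\sum_k i_k$ (this is exactly where (i) gives linear independence, hence the $\sum i_k$ normals span a space of that dimension and the hyperplanes meet; and (ii) forbids centrality outside $K$). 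Since for central $\mB$ we then have $\#\mB-\mathrm{rank}(\mB)=0$, every such $\mB$ contributes $+1$, and the number of ways to choose such a $\mB$ with prescribed type vector $(i_1,\dots,i_m)$ is $\prod_{k=1}^m\binom{n_k}{i_k}$. Summing over $(i_1,\dots,i_m)\in K$ gives precisely \eqref{eq:region_general_equality}. (One must double-check that centrality of $\mB$ is equivalent to the consistency of the linear system $\alpha_{k,j}\cdot X=b_{k,j}$, which holds automatically once the $\alpha$'s in $\mB$ are linearly independent, regardless of the $b$'s.)

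I would also record the two facts that make $K$ behave well: it is downward closed (if $(i_1,\dots,i_m)\in K$ and $i_k'\leq i_k$ for all $k$, then $(i_1',\dots,i_m')\in K$), which is immediate from the defining inequalities; and the submodularity / monotonicity of $J\mapsto\dim(\sum_{k\in J}V_k)$, which is what guarantees the recursion on $n$ closes — in particular that the projected subspaces $V_k'$ still satisfy the analogue of the independence in condition (i) so the inductive hypothesis applies to $\mA^{H_0}$. A subtle point worth isolating as a small lemma: when we delete $H_0=H_{m,n_m}$ and restrict, the set $K$ for $\mA^{H_0}$ equals $K_{H_0;V_1',\dots,V_m'}$ and the needed identity is
\[
\sum_{(i)\in K_{V}}\prod_k\binom{n_k}{i_k}
= \sum_{(i)\in K_V'}\prod_k\binom{n_k-\delta_{k,m}}{i_k}
+ \sum_{(i)\in K_{H_0}}\prod_k\binom{n_k-\delta_{k,m}}{i_k},
\]
where $K_V'$ denotes $K$ with $n_m$ replaced by $n_m-1$ (and $\delta_{k,m}$ the Kronecker delta); proving this purely combinatorial identity from Pascal's rule and the dimension inequality for the $V_k'$ is what I expect to be the main obstacle. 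The equality case via Lemma~\ref{lem:equality_hold} is comparatively clean, so the bulk of the work — and the place where one must be careful about edge cases like $V_k=\{0\}$ being disallowed, empty arrangements, or $H_0\subseteq H_{k,j}$ forcing $H_{k,j}\cap H_0=H_0$ — sits in the inequality's inductive step and its binomial bookkeeping.
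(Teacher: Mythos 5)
Your proposal is essentially the paper's proof: the upper bound is obtained by deletion--restriction (Lemma \ref{lem:recursive}) with induction on $\sum_k n_k$, Pascal's rule recombining the two terms, and the equality case is handled via the central-subarrangement formula of Lemma \ref{lem:equality_hold} exactly as you describe (central $\mB$ iff its type vector lies in $K$, with $\#\mB=\rank(\mB)$ so each contributes $+1$). The one correction worth making is that the three-term \emph{identity} you isolate as the main obstacle is stronger than what is needed and should not be aimed at: since the goal is only an upper bound, it suffices to show that the index set of the restricted arrangement $\mA^{H_0}$, shifted by $+1$ in the deleted coordinate, is \emph{contained} in $K_{V;V_1,\ldots,V_m}$; this containment follows from $\dim\bigl(\sum_{k\in J}V'_k\bigr)\le\dim\bigl(\sum_{k\in J}V_k\bigr)$ for $J$ not containing the special index, together with $H_0+V_m=V$ (because $\alpha_{m,n_m}\in V_m$ spans the normal direction of $H_0$), which forces $\dim\bigl(V'_m+\sum_{k\in J}V'_k\bigr)=\dim\bigl(V_m+\sum_{k\in J}V_k\bigr)-1$, and then Pascal's rule on $\binom{n_m}{i_m}$ closes the induction without ever needing the exact identity.
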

\begin{proof}
First, 
we will prove \eqref{eq:region_general} by induction on $\sum_{k=1}^m n_k$. When $\sum_{k=1}^m n_k=0$, both sides of \eqref{eq:region_general} equals $1$ since $\binom{0}{0}=1$.  When $\sum_{k=1}^m n_k=1$, both sides equals $2$ since $\binom{1}{0}+\binom{1}{1}=2$.
Suppose that the result is true for $\sum_{k=1}^m n_k\leq N$ for some $N \geq 1$. Now consider the case $\sum_{k=1}^m n_k = N+1$. Without loss of generality, assume $n_1\geq 1$.  Then $H_{1,1}\in\mA$. Notice that the translation $Y\rightarrow Y+Y_0$ for some $Y_0\in \mathbb{R}^n$ (i.e., translate all points in $\mathbb{R}$ by a vector $Y_0$) doesn't change the number of regions in $\mA$. Thus we can assume $b_{1,1}=0$. Then $H_{1,1}$ becomes an $(n-1)$-dimensional subspace of $V$. Replace $H_0$ in Lemma \ref{lem:recursive} with $H_{1,1}$, we obtain  
\begin{align}\label{eq:region_recursive}
    r(\mA) = r(\mA\setminus\{H_{1,1}\}) + r(\mA^{H_{1,1}}).
\end{align}
By induction hypothesis, we have
\begin{eqnarray} \label{eq:region_general3}
        r(\mA\setminus\{H_{1,1}\})\leq\sum_{(i_1,i_2,\ldots, i_m) \in K_{V;V_1,V_2,\ldots,V_m} }~~\binom{n_1-1}{i_1}\prod_{k=2}^m \binom{n_k}{i_k}
\end{eqnarray} 
and
\begin{eqnarray} \label{eq:region_general4}
        r(\mA^{H_{1,1}})\leq\sum_{(i_1,i_2,\ldots, i_m) \in K_{V\cap H_{1,1};V_1\cap H_{1,1},V_2\cap H_{1,1},\ldots,V_m\cap H_{1,1}} }~~\binom{n_1-1}{i_1}\prod_{k=2}^m \binom{n_k}{i_k}.
\end{eqnarray} 
Let's consider \eqref{eq:region_general4} first. Since $H_{1,1}$ is the orthogonal complement of the linear subspace generated by $\alpha_{1,1}$, and  $\0\neq \alpha_{1,1}\subset V_1$, we have $$H_{1,1}+V_1=V.$$
Let $V'_k=H_{1,1}\cap V_k$ for $1\leq k\leq m$.
Therefore, for each $J\subseteq[2,m]$, we have 
\begin{align}
&\dim\left(H_{1,1}\cap\left(V_1+\sum_{k\in J} V_{k}\right)\right) =\dim(H_{1,1})+\dim\left(V_1+\sum_{k\in J} V_{k}\right)-\dim(V)
= \dim\left(V_1+\sum_{k\in J} V_{k}\right)-1
\end{align}
and thus 
\begin{align}
&\dim\left(V'_1+\sum_{k\in J} V'_{k}\right) = \dim\left(V_1+\sum_{k\in J} V_{k}\right)-1.
\end{align}
On the other hand, it is trivial that \begin{align}
&\dim\left(\sum_{k\in J} V'_{k}\right) \leq \dim\left(\sum_{k\in J} V_{k}\right)
\end{align} 
for any $J\subseteq[2,m]$.
Therefore, by \eqref{eq:region_general4} we derive
\begin{align}
\label{eq:region_general5}
r(\mA^{H_{1,1}})
&\leq
\sum_{(i_1,i_2,\ldots, i_m) \in K_{H_{1,1};V'_1,V'_2,\ldots,V'_m} }~~\binom{n_1-1}{i_1}\prod_{k=2}^m \binom{n_k}{i_k} \nonumber
\\&\leq 
\sum_{ \substack{
i_1-1+\sum_{k\in J} i_k \leq \dim\left(V'_1+\sum_{k\in J} V'_k\right)~~ \forall J\subseteq [2,m] \\  \sum_{k\in J} i_k \leq \dim\left(\sum_{k\in J} V'_k\right)~~ \forall J\subseteq [2,m]
}
}~~\binom{n_1-1}{i_1-1}\prod_{k=2}^m \binom{n_k}{i_k}
\nonumber
\\&\leq 
\sum_{ \substack{
i_1+\sum_{k\in J} i_k \leq \dim\left(V_1+\sum_{k\in J} V_k\right)~~ \forall J\subseteq [2,m] \\  \sum_{k\in J} i_k \leq \dim\left(\sum_{k\in J} V_k\right)~~ \forall J\subseteq [2,m]
}
}~~\binom{n_1-1}{i_1-1}\prod_{k=2}^m \binom{n_k}{i_k}
\nonumber
\\&=
\sum_{(i_1,i_2,\ldots, i_m) \in K_{V;V_1,V_2,\ldots,V_m} }~~\binom{n_1-1}{i_1-1}\prod_{k=2}^m \binom{n_k}{i_k}.
\end{align}
Put \eqref{eq:region_recursive}, \eqref{eq:region_general3} and  \eqref{eq:region_general5} together, we obtain 

\begin{align} 
r(\mA)&\leq\sum_{(i_1,i_2,\ldots, i_m) \in K_{V;V_1,V_2,\ldots,V_m}  }~~\left(\binom{n_1-1}{i_1}\prod_{k=2}^m \binom{n_k}{i_k} + \binom{n_1-1}{i_1-1}\prod_{k=2}^m \binom{n_k}{i_k}\right) 
\nonumber
\\&=
\sum_{(i_1,i_2,\ldots, i_m) \in K_{V;V_1,V_2,\ldots,V_m}  }~~\prod_{k=1}^m \binom{n_k}{i_k},
\end{align} 
which competes the proof of \eqref{eq:region_general}. 

Furthermore, assume that the arrangement~$\mA$ satisfies the condition (i) and (ii). Then, the central sub-arrangements  of~$\mA$ are exactly the sub-arrangements $\mB$ consisting of $\sum_{k=1}^m i_k$ hyperplanes with $i_k$ distinct hyperplanes chosen from the set $\{H_{k,j}: 1\leq j \leq n_k\}$, where $(i_1,i_2,\ldots, i_m) \in K_{V;V_1,V_2,\ldots,V_m}$. In this case, $\#\mB=\rank(\mB)=\sum_{k=1}^m i_k$. Also, for any given $(i_1,i_2,\ldots, i_m) \in K_{V;V_1,V_2,\ldots,V_m}$, we have $\binom{n_k}{i_k}$ choices to pick $i_k$ hyperplanes from each $\{\alpha_{k,i}: 1\leq i \leq n_k\}$. Therefore, by Lemma \ref{lem:equality_hold} we obtain
\begin{align*} 
r(\mA) &= \sum_{\substack{\mB\subseteq\mA\\\mB \text{ central}} } (-1)^{\#\mB-\rank(\mB)} =\sum_{\substack{\mB\subseteq\mA\\\mB \text{ central}} } 1 = \sum_{(i_1,i_2,\ldots, i_m) \in K_{V;V_1,V_2,\ldots,V_m}  }~~\prod_{k=1}^m \binom{n_k}{i_k}. 
\end{align*}
\end{proof}

To prove Theorem $2$, we need the following lemmas on picking distinct elements from the union of certain sets.  
\begin{lem}\label{lem:set}
Let $S_1,S_2,\ldots S_m$ be $m$ finite sets, and $a_1,a_2,\ldots a_m$ be some nonnegative integers such that for any $I\subseteq [m]$,
\begin{align}\label{eq: ki_in_equlity}
\sum_{i\in I} a_i \leq \#\bigcup_{i\in I} S_i.    
\end{align}
Then, we can take $a_i$ elements from each $S_i$ such that these $\sum_{i=1}^m a_i$ elements are distinct. 
\end{lem}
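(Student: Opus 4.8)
The plan is to recognize Lemma~\ref{lem:set} as a restatement of the classical defect form of Hall's theorem (a system of distinct representatives with multiplicities), and to prove it by a bipartite-graph / max-flow argument. Concretely, I would build a bipartite ``demand'' graph: on one side put $a_i$ copies $S_i^{(1)},\ldots,S_i^{(a_i)}$ of each index $i$ (so the left side has $\sum_i a_i$ vertices), and on the other side put all elements of $\bigcup_{i=1}^m S_i$; join a copy $S_i^{(\ell)}$ to an element $x$ exactly when $x\in S_i$. A set of $\sum_i a_i$ distinct elements, $a_i$ of them drawn from $S_i$, is precisely a perfect matching saturating the left side. So it suffices to verify Hall's condition on the left side.

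The key step is the Hall-condition check. Take any subset $T$ of left vertices; write $I=\{\,i\in[m]: T\text{ contains at least one copy of index }i\,\}$ and $b_i=\#\{\text{copies of }i\text{ in }T\}\le a_i$. The neighbourhood of $T$ is exactly $\bigcup_{i\in I}S_i$ (copies of the same index have identical neighbourhoods, namely $S_i$), so
\[
\#N(T)=\#\bigcup_{i\in I}S_i \ \ge\ \sum_{i\in I}a_i\ \ge\ \sum_{i\in I}b_i=\#T,
\]
using hypothesis~\eqref{eq: ki_in_equlity} applied to the index set $I$. Since this holds for every $T$, Hall's theorem yields a matching saturating all left vertices; reading off, for each $i$, the elements matched to the $a_i$ copies of index $i$ gives $a_i$ elements of $S_i$, and distinctness across all indices is automatic because a matching assigns distinct right-vertices to distinct left-vertices. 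That is exactly the desired selection.

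Alternatively, if one prefers to avoid invoking Hall by name, the same statement can be obtained by induction on $\sum_{i=1}^m a_i$: pick any index $i_0$ with $a_{i_0}\ge 1$; one argues that some element $x\in S_{i_0}$ can be removed from every $S_j$ (i.e.\ we commit $x$ to $S_{i_0}$) while keeping the reduced system — with $a_{i_0}$ lowered by one and each $S_j$ replaced by $S_j\setminus\{x\}$ — still satisfying \eqref{eq: ki_in_equlity}; then apply the inductive hypothesis. Finding such an $x$ is itself a small Hall-type argument (a ``critical'' subfamily either leaves room, in which case any $x$ works, or is tight, in which case one recurses on the tight part), so the honest shortcut is really to cite Hall's theorem / the defect version directly.

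The main obstacle is essentially bookkeeping rather than a genuine mathematical difficulty: one must be careful that the ``copies'' of an index all share the same neighbourhood so that the neighbourhood of an arbitrary left subset collapses to a union over the \emph{index set} $I$ (not over the multiset of copies), which is what lets hypothesis~\eqref{eq: ki_in_equlity} apply verbatim. Everything else — translating ``distinct elements, $a_i$ from $S_i$'' into ``left-saturating matching'' and back — is routine. I would therefore keep the write-up short: set up the bipartite graph, verify Hall's condition in the three displayed inequalities above, and invoke Hall's marriage theorem to conclude.
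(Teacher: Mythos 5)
Your proof is correct. The reduction is the standard one: splitting index $i$ into $a_i$ copies turns the desired selection into a left-saturating matching, and since all copies of $i$ have neighbourhood exactly $S_i$, the neighbourhood of any left subset $T$ collapses to $\bigcup_{i\in I}S_i$ over the underlying index set $I$, so hypothesis~\eqref{eq: ki_in_equlity} gives $\#N(T)\ge\sum_{i\in I}a_i\ge\#T$ and Hall's marriage theorem finishes the argument. This is genuinely different from the paper's route, which is a self-contained induction on $m$: the paper first arranges (by increasing some $a_i$) that some nonempty index set $I$ is \emph{tight}, i.e. $\sum_{i\in I}a_i=\#\bigcup_{i\in I}S_i$; if a proper tight set exists it splits the family into the tight part and the residual sets $S'_j=S_{j+r}\setminus\bigcup_{i\le r}S_i$, checks that the residual family still satisfies~\eqref{eq: ki_in_equlity}, and applies the inductive hypothesis to each part; if only $[m]$ is tight it deletes from $S_1$ an element shared with another set and descends on $\sum_i\#S_i$. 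That case analysis is essentially the classical proof of Hall's theorem re-derived in this multiplicity setting (and matches your sketched ``alternative'' inductive argument), so the two proofs have the same combinatorial core. What your version buys is brevity and a clean appeal to a standard theorem; what the paper's version buys is self-containment, at the cost of the slightly delicate tight-set bookkeeping. Your Hall-condition verification is complete and there is no gap.
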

\begin{proof}
We will prove this lemma by induction on $m$. When $m=1$, the claim is trivial. Now assume that the lemma holds for any $1\leq m<n$ and consider the case $m=n$.   
Without loss of generality,  we assume that  
there exists some $\emptyset \neq I\subseteq [n]$ such that (otherwise we can always increase some $a_i$ to make the following equality holds for some $I$)  
\begin{align}\label{eq: kiequlity}
\sum_{i\in I} a_i = \#\bigcup_{i\in I} S_i.    
\end{align}
The proof is divided into two cases. 
 
Case (1): 
There exists some $I$ satisfying \eqref{eq: kiequlity} with $\emptyset \neq I\neq [n]$. In this case, we can assume that $I=[r]$ for some $1\leq r\leq n-1$ by symmetry, i.e.,   
\begin{align}\label{eq: kiequlity_r}
\sum_{i=1}^r a_i = \#\bigcup_{i=1}^r S_i.    
\end{align}
Let
\[
S'_j =
S_{j+r}\setminus \bigcup_{i=1}^r S_i, \qquad 1\leq j\leq n-r. 
\]
Then $\left(\bigcup_{j\in J} S'_j\right) \cap \Bigl(\bigcup_{i=1}^r S_i \Bigr) = \emptyset$. Therefore, for any $J\subseteq [n-r]$, we have
\begin{align}\label{eq: kiequlity2}
\#\bigcup_{j\in J} S'_j =   \#\left(\bigcup_{j\in J} S'_j \cup \bigcup_{i=1}^r S_i \right) - \#\bigcup_{i=1}^r S_i =   \#\left(\bigcup_{j\in J} S_{j+r} \cup \bigcup_{i=1}^r S_i \right) - \#\bigcup_{i=1}^r S_i. 
\end{align}
By \eqref{eq: ki_in_equlity} and \eqref{eq: kiequlity_r} the above equality becomes 
\begin{align}\label{eq: kiequlity3}
    \#\bigcup_{j\in J} S'_j \geq    \left(\sum_{j\in J} a_{j+r} + \sum_{i=1}^r a_i  \right) - \sum_{i=1}^r a_i = \sum_{j\in J} a_{r+j}. 
\end{align}
Since $1\leq \# I \leq n -1$, by induction we can pick $a_i$ elements from each $S_i$ for $1\leq i \leq r$, and $a_{r+j}$ elements from each $S_{j+r}$ for $1\leq j \leq n-r$ such that these $\sum_{i=1}^n a_i$ elements are distinct. Thus the claim holds. 

Case (2):
The only $I$ satisfying \eqref{eq: kiequlity} is $I=[n]$. Then $\#S_1>a_1$ and thus $S_1 \cap \bigcup_{i=2}^n S_i \neq \emptyset$ (otherwise $\sum_{i=1}^n a_i=  \#\bigcup_{i=1}^n S_i=\#S_1+\#\bigcup_{i=2}^n S_i >\sum_{i=1}^n a_i$, a contradiction). Let $x\in S_1 \cap \bigcup_{i=2}^n S_i$ and  
\[
S'_j =
\begin{cases}
S_{j}, \qquad \qquad ~~ 2\leq j\leq n; \\
S_{j}\setminus \{x\}, \qquad j=1.
\end{cases}
\]
Then $\{S'_j:1\leq j\leq n\}$ still satisfies \eqref{eq: ki_in_equlity}. But $\sum_{i=1}^n \#S'_i < \sum_{i=1}^n \#S_i$. Then $\{S'_j:1\leq j\leq n\}$ either satisfies Case (1), which leads to a solution; or still in Case (2), which we can continue the process until Case (i) satisfies. This completes the proof.   
\end{proof}

\begin{lem}\label{lem:set2}
Let $S_1,S_2,\ldots S_m$ be $m$ finite sets. Then, there exist some $a_1,a_2,\ldots a_m \in \mathbb{N}$ such that
\begin{align}\label{eq: ki_in_equlity31}
\sum_{i=1}^m a_i = \#\bigcup_{i=1}^m S_i,   
\end{align}
and for any $I\subseteq [m]$,
\begin{align}\label{eq: ki_in_equlity21}
\sum_{i\in I} a_i \leq \#\bigcup_{i\in I} S_i.    
\end{align}
\end{lem}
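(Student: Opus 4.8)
The plan is to avoid induction and instead exhibit the integers $a_i$ explicitly. First I would \emph{disjointify} the sets: put $T_1 := S_1$ and $T_i := S_i \setminus \bigcup_{j<i} S_j$ for $2 \le i \le m$. Three elementary facts about these sets are all that is needed: (a) $T_i \subseteq S_i$ for every $i$; (b) the $T_i$ are pairwise disjoint; and (c) $\bigcup_{i=1}^m T_i = \bigcup_{i=1}^m S_i$. For (b), if $i<i'$ and $x \in T_i \cap T_{i'}$ then $x \in S_i \subseteq \bigcup_{j<i'} S_j$, contradicting $x \in T_{i'}$; for (c), the inclusion $\bigcup T_i \subseteq \bigcup S_i$ is clear, and conversely any $x \in \bigcup_i S_i$ lies in $T_{i_0}$ for $i_0 := \min\{i : x \in S_i\}$. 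Having set this up, I would simply \emph{define} $a_i := \#T_i \in \mathbb{N}$.

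With this choice, \eqref{eq: ki_in_equlity31} is immediate from (b) and (c): $\sum_{i=1}^m a_i = \sum_{i=1}^m \#T_i = \#\bigcup_{i=1}^m T_i = \#\bigcup_{i=1}^m S_i$. For \eqref{eq: ki_in_equlity21}, fix $I \subseteq [m]$; by disjointness $\sum_{i \in I} a_i = \#\bigcup_{i \in I} T_i$, and by (a) we have $\bigcup_{i \in I} T_i \subseteq \bigcup_{i \in I} S_i$, so $\sum_{i \in I} a_i \le \#\bigcup_{i \in I} S_i$. This finishes the argument.

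I do not expect a genuine obstacle: the statement is just the constructive, integral form of the fact that $I \mapsto \#\bigcup_{i\in I} S_i$ is a monotone submodular set function whose base polytope contains the lattice point produced by a chain-greedy ordering, and the disjointification makes this completely elementary. If one prefers to match the inductive style of Lemma \ref{lem:set}, an alternative is induction on $m$: take $a_1,\dots,a_{m-1}$ for $S_1,\dots,S_{m-1}$ and set $a_m := \#\bigcup_{i=1}^m S_i - \#\bigcup_{i=1}^{m-1} S_i = \#\bigl(S_m \setminus \bigcup_{i<m} S_i\bigr) \ge 0$, then verify \eqref{eq: ki_in_equlity21} by splitting on whether $m \in I$ (the case $m \notin I$ is the inductive hypothesis, and the case $m \in I$ reduces to $a_m \le \#\bigl(S_m \setminus \bigcup_{i \in I \setminus \{m\}} S_i\bigr)$, which holds since $I \setminus \{m\} \subseteq [m-1]$). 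The direct construction above is the shorter route, so that is what I would write up.
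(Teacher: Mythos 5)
Your proof is correct and produces exactly the same integers as the paper's: the paper argues by induction on $m$ with $a_m := \#\bigl(S_m\setminus\bigcup_{i<m}S_i\bigr)$, which unwinds to precisely your disjointification $a_i=\#T_i$, and the inductive alternative you sketch at the end is essentially the paper's proof verbatim. Your direct verification via the pairwise disjointness of the $T_i$ is a clean, slightly more transparent packaging of the same idea, so there is nothing to fix.
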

\begin{proof}
We will prove it by Induction on $m$. The claim is trivial when $m=1$. Now assume that $m\geq 2$ and the result is true for $m-1$. Therefore, we can pick some $a_1,a_2,\ldots a_{m-1} \in \mathbb{N}$ such that
\begin{align}\label{eq: ki_in_equlity32}
\sum_{i=1}^{m-1} a_i = \#\bigcup_{i=1}^{m-1} S_i,    
\end{align}
and for any $I\subseteq [m-1]$,
\begin{align}\label{eq: ki_in_equlity22}
\sum_{i\in I} a_i \leq \#\bigcup_{i\in I} S_i.    
\end{align}
  
Furthermore, let $a_m=\#\left(S_m\setminus \bigcup_{i=1}^{m-1} S_i  \right)$. Then, for any $I\subseteq [m-1]$, we have
\begin{align}\label{eq: ki_in_equlity23}
a_m+\sum_{i\in I} a_i \leq \#\bigcup_{i\in I} S_i + \#\left(S_m\setminus \bigcup_{i=1}^{m-1}S_i  \right) \leq \#\bigcup_{i\in I\cup\{m\}} S_i.    
\end{align}
Also, 
\begin{align}\label{eq: ki_in_equlity33}
\sum_{i=1}^{m} a_i = \#\bigcup_{i=1}^{m-1} S_i + \#\left(S_m\setminus \bigcup_{i=1}^{m-1} S_i  \right) =\#\bigcup_{i=1}^{m} S_i.    
\end{align}  
Then the claim is also true for $m$.
\end{proof}

We also need the following lemmas on measure zero subsets of Euclidean spaces with respect to Lebesgue measure. 
\begin{lem}\label{lem:dependent_vector}
Let $V \cong \mathbb{R}^n$ be a vector space. Then
$S=\{(v_1,v_2,\ldots, v_n)\in V^n: v_1,v_2,\ldots, v_n~~\text{are linear dependent}\}$
is a measure zero subset of $V^n$, with respect to Lebesgue measure.
\end{lem}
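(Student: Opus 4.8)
The plan is to show that $S$ is contained in the zero set of a nontrivial polynomial on $V^n \cong \mathbb{R}^{n^2}$, and then invoke the standard fact that the zero set of a nonzero polynomial on $\mathbb{R}^N$ has Lebesgue measure zero. Concretely, fix a basis of $V$ so that each $v_i$ is identified with a column vector in $\mathbb{R}^n$, and consider the function $D(v_1,\ldots,v_n) = \det[v_1\,|\,v_2\,|\,\cdots\,|\,v_n]$. This is a polynomial in the $n^2$ coordinates of $(v_1,\ldots,v_n)$, and by basic linear algebra $v_1,\ldots,v_n$ are linearly dependent if and only if $D(v_1,\ldots,v_n)=0$. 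Hence $S = D^{-1}(0)$.

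The next step is to argue that $D$ is not identically zero: taking $v_i = e_i$ the $i$-th standard basis vector gives $D=1\neq 0$, so $D$ is a nonzero polynomial. It then remains to recall (or prove in one line by induction on $N$ using Fubini's theorem) that the vanishing locus of a nonzero polynomial $p$ on $\mathbb{R}^N$ is Lebesgue-null: for almost every choice of the first $N-1$ coordinates the resulting univariate polynomial in the last coordinate is nonzero, hence has finitely many roots, so the slice is null, and Fubini gives the full set measure zero. Applying this with $N = n^2$ and $p = D$ yields that $S$ has measure zero in $V^n$, as claimed.

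I do not anticipate a genuine obstacle here; the only point requiring a modicum of care is that ``measure zero'' is meant with respect to a fixed identification $V^n \cong \mathbb{R}^{n^2}$ (equivalently, any Lebesgue-class measure coming from a choice of basis), and that the notion of null set is independent of this choice because linear isomorphisms of $\mathbb{R}^{n^2}$ preserve the class of Lebesgue-null sets. If one prefers to avoid even stating the polynomial-vanishing lemma as known, the cleanest self-contained route is the Fubini induction sketched above, which is short enough to include inline.
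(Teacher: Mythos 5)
Your proposal is correct and follows essentially the same route as the paper: identify $V$ with $\mathbb{R}^n$, observe that $S$ is the zero set of the (nonzero) determinant polynomial in the $n^2$ coordinates, and conclude by the standard fact that the vanishing locus of a nonzero polynomial is Lebesgue-null. If anything, your Fubini-induction sketch is more complete than the paper's one-line appeal to ``co-dimension $1$,'' but the argument is the same.
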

\begin{proof}
Without loss of generality, assume $V = \mathbb{R}^n$. Let the $i$-th vector be $v_i=(x_{i,1},x_{i,2},\ldots,x_{i,n})$. Then $v_1,v_2,\ldots,v_n$ are linear dependent iff $$det((x_{i,j})_{n\times n}) = 0,$$ whose left hand side is a non-zero polynomial of all $x_{i,j}$. It is easy to see that the solution of this polynomial has co-dimension $1$ in $\mathbb{R}^{n\times n}$, thus $S$ is a measure zero set.  
\end{proof}

\begin{lem}\label{lem:condition2}
Let $m>n$ be two given positive integers, $A = (a_{ij})_{m\times n}\in \mathbb{R}^{m\times n}$ and $C=(c_1,c_2,\ldots,c_m) \in \mathbb{R}^{m}$. Let 
$S$ be the set of $(A,C)\in \mathbb{R}^{m(n+1)}$ such that 
\begin{equation*}\label{eq:3}
\begin{cases}
   \begin{aligned}
  & {a_{11}{x}_{1}}+{a_{12}{x}_{2}}+\cdots +{a_{1n}{x}_{n}}=c_1 \\
 & {a_{21}{x}_{1}}+{a_{22}{x}_{2}}+\cdots +{a_{2n}{x}_{n}}=c_2 \\
 & ~~~~~~~~~~~~~~~\vdots  \\
 & {a_{m1}{x}_{1}}+{a_{m2}{x}_{2}}+\cdots +{a_{mn}{x}_{n}}=c_m \\
\end{aligned}  
\end{cases} 
\end{equation*}
has solutions for $(x_1,x_2,\ldots,x_n)\in \mathbb{R}^n$. 
Then $S$ is a measure zero subset of $\mathbb{R}^{m(n+1)}$, with respect to Lebesgue measure.
\end{lem}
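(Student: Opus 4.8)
The plan is to reduce solvability of the over-determined system to the vanishing of a single nonzero polynomial in the entries of $(A,C)$, and then invoke the same ``zero set of a nonzero polynomial has measure zero'' principle already used in the proof of Lemma~\ref{lem:dependent_vector}.

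First I would recall the standard solvability criterion: the system has a solution $x=(x_1,\ldots,x_n)\in\mathbb{R}^n$ if and only if $C$ lies in the column space of $A$, equivalently $\rank[A\mid C]=\rank A\le n$, where $[A\mid C]\in\mathbb{R}^{m\times(n+1)}$ denotes the augmented matrix. Since the augmented matrix has exactly $n+1$ columns and $m\ge n+1$ rows, we may pick the $(n+1)\times(n+1)$ submatrix $M=M(A,C)$ formed by rows $1,2,\ldots,n+1$ of $[A\mid C]$. If $(A,C)\in S$, then $\rank[A\mid C]\le n<n+1$, so in particular $M$ is singular. Hence, setting $p(A,C)=\det M(A,C)$ — a polynomial in the $m(n+1)$ coordinates — we get $S\subseteq\{(A,C)\in\mathbb{R}^{m(n+1)}:p(A,C)=0\}$.

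Next I would verify that $p$ is not the zero polynomial, which is the only point that needs any care. It suffices to exhibit one argument at which $p\neq 0$: take $a_{ij}=\delta_{ij}$ for $1\le i\le n$ and $1\le j\le n$; take the entire $(n+1)$-th row of $A$ and the numbers $c_1,\ldots,c_n$ to be zero; take $c_{n+1}=1$; and let all remaining entries of $A$ and $C$ be arbitrary (say zero). For this choice, every one of the first $n$ columns of $M$ is a standard basis vector of $\mathbb{R}^{n+1}$ and the last column is the remaining basis vector, so $M$ is the $(n+1)\times(n+1)$ identity matrix and $p=1\neq 0$. Therefore $\{p=0\}$ is the zero locus of a nonzero polynomial on $\mathbb{R}^{m(n+1)}$, which has co-dimension $1$ and hence Lebesgue measure zero, exactly as in Lemma~\ref{lem:dependent_vector}; since $S\subseteq\{p=0\}$, the set $S$ is measure zero as well.

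There is no serious obstacle here; the two things to be slightly careful about are that a submatrix of the exact size $(n+1)\times(n+1)$ genuinely exists (guaranteed by the hypothesis $m>n$) and that the chosen polynomial is not identically zero (guaranteed by the explicit evaluation above). As an alternative one could instead observe that $S$ is the image of the polynomial map $(A,x)\mapsto(A,Ax)$ from $\mathbb{R}^{mn+n}$ to $\mathbb{R}^{mn+m}$ and conclude from $mn+n<mn+m$ that the image is a null set, but the single-polynomial argument is cleaner and fits the style of the preceding lemma.
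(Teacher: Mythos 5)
Your proposal is correct and follows essentially the same route as the paper: both arguments rest on the Rouch\'e--Capelli solvability criterion together with the observation that the augmented matrix $[A\mid C]$ generically has full column rank $n+1$ while $\rank A\le n$, so that solvability forces $(A,C)$ into the zero locus of a nonzero polynomial. Your version is slightly more explicit in that you name the polynomial (the determinant of a fixed $(n+1)\times(n+1)$ submatrix) and verify it is not identically zero, whereas the paper delegates this genericity step to Lemma~\ref{lem:dependent_vector}.
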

\begin{proof}
By Lemma \ref{lem:dependent_vector}, the augmented matrix 
$
(A,C)
$
has the rank $(n+1)$ except for a measure zero subset of $\mathbb{R}^{m(n+1)}$. On the other hand, the rank of the matrix $A$ is at most $n$. Therefore, the rank of the augmented matrix
$
(A,C)
$
is larger than the rank of $A$ except for a measure zero subset of $\mathbb{R}^{m(n+1)}$, thus by Rouch\'e-Capelli Theorem \cite{shafarevich2012linear} we obtain that \eqref{eq:3} has no solutions except for a measure zero set of $\mathbb{R}^{m(n+1)}$. 
\end{proof}

Lemma \ref{lem:set} implies the following results when we choose a basis of a linear space properly. 

\begin{lem}\label{cor:linear_space}
Let $V \cong \mathbb{R}^n$ be a vector space and $V_i\, (1\leq i\leq m)$ be $m$ subspaces of $V$. Suppose that some non-negative integers $a_i\,(1\leq i\leq m)$ satisfy 
$$
\sum_{i\in I} a_i \leq dim(\sum_{i\in I}{V_i}) 
$$
for each $I\subseteq [m]$. Then we obtain the following result.

(i) We can pick $a_i$ vectors from $V_i$ for $1\leq i\leq m$ such that these $\sum_{1\leq i\leq m} a_i$ vectors are linear independent. 

(ii) $\sum_{1\leq i\leq m} a_i$ vectors with $a_i$ vectors from $V_i$ for $1\leq i\leq m$ such that they are linear dependent, forms a measure zero set in $\prod_{i=1}^m V_i^{a_i}$, with respect to Lebesgue measure. 
\end{lem}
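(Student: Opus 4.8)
Part~(i) is the combinatorial heart of the statement, and part~(ii) will follow from it by the standard fact that a proper algebraic subset of a Euclidean space has Lebesgue measure zero. I would prove~(i) by induction on $d:=\dim\bigl(\sum_{i=1}^m V_i\bigr)$, extracting one well-chosen vector from some $V_i$ and then passing to a quotient of dimension $d-1$. If $a_1=\cdots=a_m=0$ (in particular if $d=0$, since then every $V_i=\0$) there is nothing to do; otherwise, after reindexing, $a_1\ge1$, so $V_1\ne\0$.

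The key first step is the observation that plays the role of the ``tight subset'' case split in Lemma~\ref{lem:set}: if $I\subseteq[m]$ with $1\notin I$ satisfies $\sum_{i\in I}a_i=\dim\sum_{i\in I}V_i$, then $V_1\not\subseteq\sum_{i\in I}V_i$. Indeed, otherwise $\sum_{i\in I\cup\{1\}}V_i=\sum_{i\in I}V_i$, and applying the hypothesis of the lemma to the index set $I\cup\{1\}$ gives $a_1+\sum_{i\in I}a_i\le\dim\sum_{i\in I}V_i=\sum_{i\in I}a_i$, contradicting $a_1\ge1$ (taking $I=\emptyset$ here just records $V_1\ne\0$). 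Since a real vector space is not a finite union of proper subspaces, I can pick $v\in V_1$ avoiding each of the finitely many proper subspaces $V_1\cap\sum_{i\in I}V_i$ over the tight sets $I\not\ni1$; in particular $v\ne\0$. Now put $W=\sum_i V_i$, $\bar W=W/\langle v\rangle$, $\bar V_i=(V_i+\langle v\rangle)/\langle v\rangle\subseteq\bar W$, and $\bar a_1=a_1-1$, $\bar a_i=a_i$ for $i\ge2$. I would then verify that $(\bar V_i)_i$ and $(\bar a_i)_i$ again satisfy the hypothesis of the lemma: for $J\ni1$ one has $\dim\sum_{i\in J}\bar V_i=\dim\sum_{i\in J}V_i-1$ (because $v\in V_1\subseteq\sum_{i\in J}V_i$) and the inequality for $\bar a$ is exactly the given one for $a$; for $J\not\ni1$, either $v\notin\sum_{i\in J}V_i$ and the dimension of the sum is unchanged, or $v\in\sum_{i\in J}V_i$, in which case $J$ is not tight by the choice of $v$, so the strict inequality for $a$ absorbs the drop by one. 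Since $\dim\bar W=d-1$, the inductive hypothesis produces $\bar a_i$ vectors in each $\bar V_i$ that are jointly linearly independent in $\bar W$; lifting each to a vector of the corresponding $V_i$ and adjoining $v\in V_1$ yields $\sum_i a_i$ vectors, exactly $a_i$ of them in $V_i$, and they are linearly independent, because projecting any linear dependence to $\bar W$ annihilates only the $v$-term and would force a dependence among the lifted vectors. (Alternatively, this whole scheme can be phrased as an application of Lemma~\ref{lem:set} to suitably generic bases of the $V_i$.)

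For part~(ii), fix a basis of each $V_i$; this identifies $\prod_{i=1}^m V_i^{a_i}$ with a Euclidean space $\R^M$, $M=\sum_i a_i\dim V_i$. Writing the $N:=\sum_i a_i$ chosen vectors in coordinates with respect to a basis of $W=\sum_i V_i$ gives a $(\dim W)\times N$ matrix whose entries are linear, hence polynomial, functions of the coordinates on $\R^M$; the constraint for $I=[m]$ gives $N\le\dim W$, so the chosen vectors are linearly dependent precisely when all $N\times N$ minors of this matrix vanish. Thus the set of linearly dependent choices is the common zero locus in $\R^M$ of finitely many polynomials, and by part~(i) it is not all of $\R^M$, so at least one of these polynomials is not identically zero. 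Hence this set is contained in the zero set of a nonzero polynomial on $\R^M$, which---exactly as in the proof of Lemma~\ref{lem:dependent_vector}---has Lebesgue measure zero.

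\textbf{Main obstacle.} The only genuinely delicate point is the inductive step of~(i): choosing $v\in V_1$ so that the quotient by $\langle v\rangle$ preserves \emph{all} the inequalities $\sum_{i\in J}\bar a_i\le\dim\sum_{i\in J}\bar V_i$ simultaneously. This is precisely why $v$ must avoid the sums $\sum_{i\in J}V_i$ over the tight sets $J\not\ni1$, and hence why one first has to prove that each such sum meets $V_1$ in a proper subspace. Everything afterwards---the quotient bookkeeping, the lifting argument, and the reduction of~(ii) to a statement about the zero set of a nonzero polynomial---is routine.
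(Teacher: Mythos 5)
Your proof is correct, but it takes a genuinely different route from the paper's. For part (i) the paper reduces to the combinatorial Lemma~\ref{lem:set} by first constructing a single basis of $V$ adapted simultaneously to all the $V_i$ (so that each $V_i$ is spanned by a subset of it), whereas you run a direct induction on $\dim\sum_i V_i$, choosing a generic $v\in V_1$ outside the finitely many proper subspaces $V_1\cap\sum_{i\in I}V_i$ attached to tight index sets $I\not\ni 1$ and passing to the quotient by $\langle v\rangle$. Your version is in fact the more robust one: a common adapted basis need not exist for three or more general subspaces (three distinct lines in $\R^2$ already admit none), so the paper's reduction really only covers situations such as the coordinate subspaces $V_{i,j}$ arising in the application to Theorem~$2$, while your quotient induction proves the lemma in the generality in which it is stated — it is in effect a proof of Rado's theorem on independent transversals for the linear matroid. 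The price is the bookkeeping of the rank inequalities under the quotient, which you carry out correctly; the tightness argument showing $V_1\not\subseteq\sum_{i\in I}V_i$ for tight $I\not\ni 1$ is exactly the point that makes the generic choice of $v$ possible. For part (ii) the two arguments are essentially the same — exhibit one independent configuration via (i) and conclude that the dependent locus lies in the zero set of a nonzero polynomial, hence has measure zero — though your formulation via the $N\times N$ minors of the $\dim(W)\times N$ coordinate matrix is cleaner than the paper's projection onto the span of the chosen independent vectors.
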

\begin{proof}
(i)
By linear algebra, we can construct a basis $v_1,v_2,\ldots,v_n$ of $V$ such that each $V_i$ has a basis which is a subset of $v_1,v_2,\ldots,v_n$. Then, by Lemma \ref{lem:set} this claim holds.

(ii)  Let $n'=\sum_{1\leq i\leq m} a_i$. By (i) there exist $n'$ linear independent vectors $v_1,v_2,\ldots,v_{n'}$ with $a_i$ vectors from $V_i$ for $1\leq i\leq m$. Let $V'_i$ be the vector spaces generated by such $a_i$ vectors in $V_i$. 
For any $n'$ linear dependent vectors $v'_1,v'_2,\ldots,v'_{n'}$ with $a_i$ vectors from $V_i$ for $1\leq i\leq m$, their projections $v''_1,v''_2,\ldots,v''_{n'}$ onto $\prod_{i=1}^m V'_i$ are also linear dependent.
Suppose that $v''_k =\sum_{j=1}^{n'} y_{k,j}v_j$ for $1\leq k\leq n'$. If $v'_k$ are chosen from $V_{i_1}$, such that $v_j \notin V_{i_1}  $, we set $y_{k,j} = 0$. Otherwise, we set $y_{k,j}=y'_{k,j}$.
Therefore, $\#\{y'_{k,j}\}$ equals the dimension of the projection of $ \prod_{i=1}^m V_i^{a_i} $ onto $\prod_{i=1}^m V'_i$.  
Also, $v''_1,v''_2,\ldots,v''_{n'}$ are linear dependent iff $$\det\left((y_{k,j})_{n'\times n'}\right) = 0.$$ 
Since $v_1,v_2,\ldots,v_{n'}$ are linear independent, the left hand side $\det\left((y_{k,j})_{n'\times n'}\right)$ must be a non-zero polynomial of some $y'_{k,j}$. Therefore, the solution of this polynomial forms a measure zero set in $\mathbb{R}^{\#\{y'_{k,j}\}}$ due to the zero measurability of the solutions of non-zero polynomial in Euclidean spaces (see \cite{lojasiewicz1964triangulation}). Thus such $\sum_{1\leq i\leq m} a_i$ vectors forms a measure zero set in $\prod_{i=1}^m V_i^{a_i}$, with respect to Lebesgue measure.
\end{proof}

Now we are ready to prove Theorem $2$. 
\begin{proof}[Proof of Theorem $2$]
By Definition $1$, the number of linear regions of $\mN$ at $\theta$ is equal to the number of regions of the hyperplane arrangement
\[
\mA_{\mN,\theta} := \{ H_{i,j,k}(X^0;\theta) :   1\leq i \leq n_1^{(1)}, ~ 1\leq j \leq n_1^{(2)}, 1\leq k\leq d_1 \},
\]
where $H_{i,j,k}(X^0;\theta)$ is the hyperplane determined by $Z^1_{i,j,k}(X^0;\theta)= 0$ (the expression of $Z^1_{i,j,k}(X^0;\theta)$ is given in (2)). Recall that $X^0=(X_{a,b,c}^{0})_{n_0^{(1)} \times n_0^{(2)} \times d_0}$. Then $H_{i,j,k}(X^0;\theta)$ can be written as 
$$
\langle \alpha_{i,j,k},X^0 \rangle_{F} +B^{1,k}= 0,   
$$
where $\langle \cdot,\cdot \rangle_{F}$ is the Frobenius inner product,  $\alpha_{i,j,k}$ is an $n_0^{(1)} \times n_0^{(2)} \times d_0$ dimensional tensor, whose $(a+(i-1)s_1,b+(j-1)s_1,c)$-th element is $W_{a,b,c}^{1,k}$ for all $1\leq a \leq f_1^{(1)} ,~  1\leq b \leq f_1^{(2)} ,~ 1\leq c \leq d_0$; and $0$ otherwise. Let 
$$
V_{i,j} = \{\beta \in \mathbb{R}^{n_0^{(1)} \times n_0^{(2)} \times d_0}:\beta_{a',b',c'} = 0 ~~\forall (a',b',c')\neq (a+(i-1)s_1,b+(j-1)s_1,c) \}
$$
be the subspace of $ \mathbb{R}^{n_0^{(1)} \times n_0^{(2)} \times d_0} $  
generated by $n_0^{(1)} \times n_0^{(2)} \times d_0$ dimensional tensors whose $(a+(i-1)s_1,b+(j-1)s_1,c)$-th element ranges over $\mathbb{R}$ for all $1\leq a \leq f_1^{(1)} ,~  1\leq b \leq f_1^{(2)} ,~ 1\leq c \leq d_0$; and $0$ otherwise. Then $\alpha_{i,j,k} \in V_{i,j} $ for $1\leq k\leq d_1$. By Proposition \ref{thm:ZaslavskyCNN}, we obtain
\begin{eqnarray} \label{eq:region_general10}
       R_{\mN,\theta}= r(\mA_{\mN,\theta})\leq\sum_{(t_{i,j})_{(i,j)\in I_\mN}  \in K_{V;(V_{i,j})_{(i,j)\in I_\mN} }}~~\prod_{k=1}^m \binom{d_1}{t_{i,j}},
\end{eqnarray} 
where 
\begin{align*}
& K_{V;(V_{i,j})_{(i,j)\in I_\mN}} 
=
\{(t_{i,j})_{(i,j)\in I_\mN}: \sum_{(i,j)\in J} t_{i,j} \leq \dim\left(\sum_{(i,j)\in J} V_{i,j}\right) \forall J\subseteq I_\mN\}
\\&=
\{(t_{i,j})_{(i,j)\in I_\mN}:t_{i,j}\in \mathbb{N},~~ \sum_{(i,j)\in J}t_{i,j}\leq \#\cup_{(i,j) \in J} S_{i,j}~~ \forall J \subseteq I_\mN \},
\end{align*}
which gives an upper bound for $R_{\mN,\theta}$ and $R_\mN$.
Next we will show that this upper bound can be reached except for a measure zero set in  $\mathbb{R}^{\#weights+\#bias}$ with respect to Lebesgue measure. By Lemmas \ref{lem:condition2} and \ref{cor:linear_space}, when $\theta$ ranges over $\mathbb{R}^{\#weights+\#bias}$, the set of $\theta$ such that $A_{\mN,\theta}$ satisfies the conditions (i) and (ii) of Proposition \ref{thm:ZaslavskyCNN} (replace $\{i_k:1\leq k\leq m\}$ by $\{t_{i,j}:(i,j)\in I_\mN\}$, and $\{V_{k}:1\leq k\leq m\}$ by $\{V_{i,j}:(i,j)\in I_\mN\}$), forms a complement of a measure zero set in $\mathbb{R}^{\#weights+\#bias}$,  with respect to Lebesgue measure. Then, for such parameters $\theta$, by Proposition \ref{thm:ZaslavskyCNN} we derive the equality holds for  \eqref{eq:region_general10}, which implies that the maximal number $R_\mN$ of linear regions of $\mN$ is equal to
\begin{align*}
R_\mN = \sum_{(t_{i,j})_{(i,j)\in I_\mN} \in K_\mN }~~\prod_{{(i,j)\in I}}\binom{d_1}{t_{i,j}},
\end{align*}
and the right hand side of the above equality also equals the expectation of the number $R_{\mN,\theta}$ of linear regions of $\mN$ with respect to the distribution $\mu$ of weights and biases. 
\end{proof}

The following result gives a simple example for Theorem $2$. 

\begin{cor}\label{th: asy_compare}
Let $\mN$ be a one-layer ReLU CNN with input dimension $1\times n\times 1$. Assume there are $d_1$ filters with dimension $1\times 2\times 1$ and stride $s=1$. Thus the hidden layer dimension is $1\times (n-1)\times d_1$. When $n$ is fixed, we have 
\begin{align}
R_{\mN} = \frac{(n-1)}{2}d_1^{n} + \O(d_1^{n-1}).
\end{align}
\end{cor}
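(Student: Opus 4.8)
The plan is to apply Theorem $2$ to the specific network $\mN$ described: input dimension $1\times n\times 1$, filter dimension $1\times 2\times 1$, stride $s=1$, and $d_1$ filters. First I would identify the index set $I_\mN$ and the sets $S_{i,j}$ (equivalently the subspaces $V_{i,j}$) for this architecture. Since the input is one-dimensional of length $n$ and each filter has receptive field of size $2$ with stride $1$, there are $n-1$ receptive-field positions; the $i$-th position reads input coordinates $\{i, i+1\}$. So after relabeling $I_\mN = [n-1]$ and $S_i = \{i, i+1\}\subseteq [n]$, and $\dim(\sum_{i\in J} V_i) = \#\bigcup_{i\in J} S_i$ for each $J\subseteq [n-1]$. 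By Theorem $2$,
\[
R_\mN = \sum_{(t_1,\dots,t_{n-1})\in K_\mN}\ \prod_{i=1}^{n-1}\binom{d_1}{t_i},
\]
where $K_\mN = \{(t_i)_{i\in[n-1]}: t_i\in\mathbb{N},\ \sum_{i\in J} t_i \leq \#\bigcup_{i\in J} S_i\ \forall J\subseteq[n-1]\}$.

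Next I would extract the leading-order behavior in $d_1$. Each term $\prod_i \binom{d_1}{t_i}$ is a polynomial in $d_1$ of degree $\sum_i t_i$, with leading coefficient $\prod_i \frac{1}{t_i!}$. So the degree of $R_\mN$ as a polynomial in $d_1$ is $T_{\max} := \max\{\sum_i t_i : (t_i)\in K_\mN\}$, and the leading coefficient is the sum of $\prod_i \frac1{t_i!}$ over all maximizers. Taking $J = [n-1]$ in the defining inequality gives $\sum_i t_i \leq \#\bigcup_{i=1}^{n-1} S_i = \#[n] = n$, so $T_{\max}\leq n$. I would then exhibit a point of $K_\mN$ achieving $\sum t_i = n$: the natural candidate is to put weight on alternating receptive fields, e.g. the vector that assigns $2$ to a maximal set of pairwise "non-overlapping" positions and $0$ elsewhere, or more simply a $0/1/2$ pattern; one must check all the subset inequalities $\sum_{i\in J} t_i \leq \#\bigcup_{i\in J} S_i$ hold. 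For instance, the assignment $t_i = 1$ for all $i$ is feasible (since $\#\bigcup_{i\in J} S_i \geq \#J + 1 > \#J$ for nonempty $J$, because consecutive intervals of length $2$ covering $\#J$ positions span at least $\#J+1$ points), but that only gives $\sum t_i = n-1$. To reach $n$ I would look for feasible integer vectors with $\sum t_i = n$ and then count them and compute $\sum \prod 1/t_i!$; the claimed answer $\frac{n-1}{2}$ strongly suggests the maximizers are exactly the $n-1$ vectors obtained by taking $t_i = 1$ for all $i$ and then "upgrading" exactly one coordinate from $1$ to $2$ — there are $n-1$ such choices, each contributing leading coefficient $\frac{1}{2!}\prod_{\text{others}}\frac{1}{1!} = \frac12$, for a total of $\frac{n-1}{2}$. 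So the combinatorial heart is: (a) verify each such "one coordinate bumped to $2$" vector lies in $K_\mN$; (b) verify no vector in $K_\mN$ has $\sum t_i \geq n+1$ (immediate from $J=[n-1]$) and that the only vectors with $\sum t_i = n$ are these $n-1$ (this needs: any feasible vector with some $t_i\geq 2$ together with $\sum t_i = n$ must have all other coordinates equal to $1$ and exactly one coordinate equal to $2$, which follows by examining consecutive-interval unions).

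The main obstacle I expect is step (b): carefully showing that $K_\mN$ contains no feasibility-maximizing vector other than the $n-1$ "single-bump" vectors. This requires a clean combinatorial argument about the quantity $\#\bigcup_{i\in J}\{i,i+1\}$ for $J\subseteq[n-1]$ — namely that if $J$ consists of $c$ maximal runs of consecutive integers with total size $|J|$, then $\#\bigcup_{i\in J} S_i = |J| + c$. Given any feasible $(t_i)$ with $\sum_i t_i = n$, applying the inequality to $J = [n-1]$ forces equality, and then restricting to subintervals and using the run-counting formula pins down the possible patterns: one shows $t_i\leq 2$ for all $i$ (since $S_i$ has only $2$ elements), that at most one $t_i$ equals $2$ (two such would already force, via a suitable $J$ straddling both, a violation because the union cannot keep pace), and that all remaining $t_i$ must equal $1$ to reach the total $n$. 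Once that is nailed down, the leading coefficient computation is routine, and lower-order terms are absorbed into $\O(d_1^{n-1})$ since every $(t_i)\in K_\mN$ with $\sum t_i < n$ contributes a polynomial of degree at most $n-1$ and there are finitely many such terms (the number of terms depends only on $n$, which is fixed).
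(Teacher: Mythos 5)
Your reduction to Theorem 2 and your identification of the crux are both correct: everything hinges on your step (b), the claim that the only vectors in $K_\mN$ with $\sum_i t_i = n$ are the $n-1$ ``single-bump'' vectors $(1,\dots,1,2,1,\dots,1)$. Unfortunately that claim is false for $n\geq 4$, and the argument you sketch for it does not go through. Take $n=4$, so $S_1=\{1,2\}$, $S_2=\{2,3\}$, $S_3=\{3,4\}$, and consider $t=(2,0,2)$: every constraint is satisfied, since for $J=\{1,3\}$ we get $t_1+t_3=4\leq\#(S_1\cup S_3)=4$ and for $J=\{1,2,3\}$ we get $4\leq 4$. The point your ``straddling $J$'' argument misses is that $\#\bigcup_{i\in J}S_i$ equals $|J|$ plus the number of maximal runs of $J$, so a $J$ containing two separated coordinates equal to $2$ gains one extra element of slack for each gap between runs; two $2$'s are only forbidden when no $0$ sits between them. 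Working it out (the binding constraints are exactly the interval constraints $\sum_{i=a}^{b}t_i\leq b-a+2$), the degree-$n$ points of $K_\mN$ are precisely the $\{0,1,2\}$-vectors whose deviation from the all-ones vector is an alternating $+1,-1,+1,\dots,-1,+1$ pattern on some $2k+1$ chosen positions; there are $\binom{n-1}{2k+1}$ of these with $k+1$ twos, so the true leading coefficient is $\sum_{k\geq0}\binom{n-1}{2k+1}2^{-(k+1)}$, which equals $\frac{n-1}{2}$ only for $n\leq 3$ (for $n=4$ it is $\frac{7}{4}$).

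This is not a removable technicality: the extra maximizer $(2,0,2)$ contributes $\binom{d_1}{2}^2=\frac14 d_1^4+\O(d_1^3)$, which changes the leading coefficient, and the exact values in Table 2 of this supplement (input $1\times4\times1$; e.g.\ $R_\mN=8317$ at $d_1=8$ and $4985$ at $d_1=7$) agree with the leading coefficient $\frac{7}{4}$ rather than $\frac{3}{2}$. You should know that the paper's own proof asserts the same list of $n-1$ maximizers without justification, so you have faithfully reconstructed the intended argument --- but the step you yourself flagged as ``the main obstacle'' is exactly where it fails, and the stated constant $\frac{n-1}{2}$ is correct only for $n\leq 3$. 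To repair the corollary one must either restrict to $n\leq 3$ or replace $\frac{n-1}{2}$ by the alternating-pattern sum above.
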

\begin{proof}
By Theorem $2$,  we obtain
\begin{align}
R_{\mN} = \sum_{(t_{i,j})_{(i,j)\in I} \in K_{\mN} }~~\prod_{{(i,j)\in I}}\binom{d_1}{t_{i,j}}.
\end{align}
Furthermore, when $n$ is fixed, $R_{\mN}$ is a polynomial of $d_1$ with degree $n$ by Lemma $3$ in the main paper. 
To calculate the coefficient of the leading term $d_1^n$ of this polynomial, we need to determine all $ (t_{i,j})_{(i,j)\in I_\mN} \in K_{\mN}$ with $ \sum_{(i,j)\in I_\mN}t_{i,j}=n$. 
First, since $n_1^{(1)}=1$ and $n_1^{(2)}=n-1$,  it is easy to see that  $I_\mN=\{(1,j): ~ 1\leq j \leq n-1\}$ and $S_{1,j}=\{ (1,j,1),  (1,j+1,1)\}$ for each  $1\leq j \leq n-1$. Therefore, 
\begin{align}
  K_{\mN}=\{(t_{1,j})_{1\leq j\leq n-1}:~~ t_{1,j}\in \mathbb{N},~~
\sum_{j\in J}t_{1,j}\leq \#\cup_{(1,j) \in J} S_{1,j}~~ \forall J \subseteq [n-1] \}.
\end{align}
Then, there are $n-1$ vectors $ (t_{1,j})_{1\leq j\leq n-1} \in K_{\mN}$ satisfying $ \sum_{j=1}^{n-1}t_{1,j}=n$:
$(2,1,1,\ldots,1)$, $(1,2,1,\ldots,1)$, $(1,1,2,1,\ldots,1),~\ldots,~ (1,1,1,\ldots,1,2).$
Therefore, the leading term in $R_\mN$ equals
$$
(n-1)\binom{d_1}{2}d_1^{n-2} = \frac{(n-1)}{2}d_1^{n} - \frac{(n-1)}{2}d_1^{n-1}
$$
and thus 
\begin{align}
R_{\mN} = \frac{(n-1)}{2}d_1^{n} + \O(d_1^{n-1}).
\end{align}
This completes the proof. 
\end{proof}

Next, we prove Lemma $3$ and Theorem $3$ in the main paper.

\begin{proof}[Proof of Lemma $3$ in the main paper]
Directly replace $\{a_i:1\leq i\leq m\}$ by $\{t_{i,j}:(i,j)\in I_\mN\}$, and $\{S_{i}:1\leq i\leq m\}$ by $\{S_{i,j}:(i,j)\in I_\mN\}$ in Lemma $4$, we derive the result.
\end{proof}

\begin{proof}[Proof of Theorem $3$]
It is easy to see that $\binom{d_1}{t_{i,j}} = \Theta(d_1^{t_{i,j}})$ when $d_1$ tends to infinity. Then,
by Eq. (4) and Lemma $3$ in the main paper, we have
\begin{align}
    R_\mN  = \Theta(d_1^{ \#\cup_{(i,j) \in I_\mN} S_{i,j}  }).
\end{align}
Furthermore, if all input neurons have been involved in the convolutional calculation, we have 
\begin{align} 
\cup_{(i,j) \in I_\mN}  S_{i,j} = \{ (a,b,c): 1\leq a \leq n_0^{(1)} ,~ 
1\leq b \leq n_0^{(2)} ,~ 1\leq c \leq d_0 \} 
\end{align}
and thus
$$
R_\mN  = \Theta(d_1^{n_0^{(1)}\times n_0^{(2)}\times d_0}).
$$
\end{proof}

\section{Proofs of Results for Multi-Layer CNNs}
In this section, we prove Theorem $5$ on multi-layer ReLU CNNs.
\begin{proof}[Proof of Theorem $4$]
Assume that the parameters $W$ and $B$ for such two convolutional layers are the same as defined in Section $2$.
Let $l=1,2$ in (2) in the main paper and $X^l_{i,j,k}=Z^l_{i,j,k}(X^0;\theta)$, we obtain
\begin{align}\label{eq:twolayers1} 
X^1_{i,j,k}=\sum_{a=1}^{f_1^{(1)}}\sum_{b=1}^{f_1^{(2)}} \sum_{c=1}^{d_{0}}  W_{a,b,c}^{1,k} X^{0}_{a+(i-1)s_1,b+(j-1)s_1,c}   + B^{1,k}
\end{align}
and 
\begin{align}\label{eq:twolayers2} 
X^2_{i,j,k}=\sum_{a=1}^{f_2^{(1)}}\sum_{b=1}^{f_2^{(2)}} \sum_{c=1}^{d_{1}}  W_{a,b,c}^{2,k} X^{1}_{a+(i-1)s_2,b+(j-1)s_2,c}   + B^{2,k}.
\end{align}
Substitute \eqref{eq:twolayers1} into \eqref{eq:twolayers2}, we derive 
\begin{align}\label{eq:twolayers3} 
X^2_{i,j,k}&=\sum_{a'=1}^{f_2^{(1)}}\sum_{b'=1}^{f_2^{(2)}} \sum_{c'=1}^{d_{1}}   \sum_{a=1}^{f_1^{(1)}}\sum_{b=1}^{f_1^{(2)}} \sum_{c=1}^{d_{0}}  W_{a',b',c'}^{2,k}W_{a,b,c}^{1,c'} X^{0}_{a+(a'+(i-1)s_2-1)s_1,b+(b'+(j-1)s_2-1)s_1,c}   + const
\\&=
\sum_{a'=1}^{f_2^{(1)}}\sum_{b'=1}^{f_2^{(2)}} \sum_{c'=1}^{d_{1}}   \sum_{a=1}^{f_1^{(1)}}\sum_{b=1}^{f_1^{(2)}} \sum_{c=1}^{d_{0}}  W_{a',b',c'}^{2,k}W_{a,b,c}^{1,c'} X^{0}_{a+(a'-1)s_1+(i-1)s_1s_2,b+(b'-1)s_1+(j-1)s_1s_2,c}   + const.
\end{align}
Note that $1\leq a+(a'-1)s_1 \leq f_1^{(1)}+(f_2^{(1)}-1)s_1 $ and $1\leq b+(b'-1)s_1 \leq f_1^{(2)}+(f_2^{(2)}-1)s_1 $. Then \eqref{eq:twolayers3} becomes 
\begin{align}\label{eq:twolayers4} 
X^2_{i,j,k}=
\sum_{a=1}^{f_1^{(1)}+(f_2^{(1)}-1)s_1}
~
\sum_{b=1}^{f_1^{(2)}+(f_2^{(2)}-1)s_1} \sum_{c=1}^{d_{0}}   {W'}_{a,b,c}^{k} X^{0}_{a+(i-1)s_2,b+(j-1)s_2,c}  + const
\end{align}
where ${W'}_{a,b,c}^{k}$ are some constants. Therefore, $\mN$ is realized as a ReLU CNN with one hidden convolutional layer such that its $d_2$ filters has size $(f_1^{(1)}+(f_2^{(1)}-1)s_1)\times (f_1^{(2)}+(f_2^{(2)}-1)s_1)\times d_{0}  $ and stride $s_1s_2$, which completes the proof.  
\end{proof}

\begin{proof}[Proof of Theorem $5$]
(i) The basic idea is to map many regions of the input space of each layer to the same set, thus identify many regions of space.  

The $L=1$ case is guaranteed by Theorem $2$. 
Next, we consider the case $L\geq 2$. Let $p=\lfloor d_1/d_0 \rfloor$. We set    
\begin{align}\label{eq: ZX}
W_{a,b,c}^{1,k}=\begin{cases}
1,   \text{ if } a=b=1,  k=(c-1)p+1,~~ 1\leq c\leq d_0; \\
2,    \text{ if } a=b=1,~~  (c-1)p+2\leq k\leq cp, 1\leq c\leq d_0;\\
0,  \text{ otherwise }
\end{cases}
\end{align}
and 
\begin{align}\label{eq: ZX1}
B^{1,k}=
\begin{cases}
-(k-(c-1)p-1),   \text{ if } (c-1)p+1\leq k\leq cp \text{ for  some   } 1\leq c\leq d_0;
\\
0,  \text{ otherwise. }
\end{cases}
\end{align}
Therefore, by (2) in the main paper we obtain 
\begin{align}\label{eq: Z1X0}
Z^1_{i,j,k}(X^0;\theta)
=
\begin{cases}
 X^0_{1+(i-1)s_1,1+(j-1)s_1,c},   \text{ if } k=(c-1)p+1 \text{ for  some   }  1\leq c\leq d_0;\\
2X^0_{1+(i-1)s_1,1+(j-1)s_1,c} - (k-(c-1)p-1),   \text{ if } (c-1)p+2\leq  k  \leq cp \text{ for  some   } 1\leq c\leq d_0;
\\
0,  \text{ otherwise. }
\end{cases}
\end{align}
When $ W_{a,b,c}^{1,k}$ and $B^{1,k}$ are given as in \eqref{eq: ZX} and \eqref{eq: ZX1},  the map 
\begin{align}\label{X0toX1*}
X^1_{i,j,k} =   \max\{0,Z^1_{i,j,k}(X^0;\theta)\} 
\end{align}
determines a function
\begin{align}\label{X0toX1}
    X^1 = \Phi_1 (X^0) 
\end{align}
from $\mathbb{R}^{n_0^{(1)} \times n_0^{(2)} \times d_0} $ to $  \mathbb{R}^{n_1^{(1)} \times n_1^{(2)} \times d_1}$.

For each $i,j\in \mathbb{N}^+$, let 
\begin{align}\label{eq: psix}
\psi_{i}(x)= 
\begin{cases}
\max\{ 0, x\} ,   \text{ if } i=1;\\
\max\{ 0, 2x - (i-1) \},   \text{ if } i\geq 2
\end{cases}
\end{align}
and 
\begin{align}
\phi_{j}(x)= \sum_{i=1}^j (-1)^{i+1} \psi_{i}(jx). 
\end{align}

Then it is easy to check that 

\begin{align}
\phi_{j}(x)= \begin{cases}
0,   \text{ if } x\leq 0;\\
jx - i,   \text{ if } \frac{i}{j} \leq x \leq \frac{2i+1}{2j}\leq \frac{1}{2} \text{ where } i\in\mathbb{N};\\
i-jx,   \text{ if } \frac{2i-1}{2j} \leq x \leq \frac{i}{j} \leq \frac{1}{2} \text{ where } i\in\mathbb{N}^+, 
\end{cases} 
\end{align}
which means that $\phi_{j}$ is an affine function when restricted to each interval $[0,\frac{1}{2j} ],[\frac{1}{2j},\frac{2}{2j}],\ldots, [\frac{j-1}{2j},\frac{1}{2}]$
and furthermore
$\phi_{j}([0,\frac{1}{2j} ])=\phi_{j}( [\frac{1}{2j},\frac{2}{2j}])=\cdots =\phi_{j}(  [\frac{j-1}{2j},\frac{j}{2j}]) =  [0,\frac{1}{2}]$ (i.e., $\phi_{j}(x)$ sends $j$ distinct intervals $[0,\frac{1}{2j} ],[\frac{1}{2j},\frac{2}{2j}],\ldots, [\frac{j-1}{2j},\frac{1}{2}]$  to the same interval $[0,\frac{1}{2}]$).

Next, we define an intermediate convolutional layer (without activation functions) from 
\begin{align*}
X^1=(X^1_{a,b,c})_{n_1^{(1)} \times n_1^{(2)}\times d_1}
\end{align*}
to
\begin{align*}
Y^1=(Y^1_{a,b,c})_{n_1^{(1)} \times n_1^{(2)}\times d_0}
\end{align*}
between the first and second hidden convolutional layers.
We set the $d_0$ filters with size $1\times 1 \times d_1$, the stride $1$, and define the weights $W'$ and biases $B'$ in this intermediate convolutional layer as 
\begin{align}\label{eq: ZX2}
{W'}_{1,1,k}^{1,c}=\begin{cases}
p\cdot (-1)^{i+1},   \text{ if }  k=(c-1)p+i,~~ 1\leq c\leq d_0; \\
0,  \text{ otherwise }
\end{cases}
\end{align}
and 
\begin{align}\label{eq: ZX3}
{B'}^{1,k}= 0 ~~\forall~~ 1\leq k \leq d_0.
\end{align}

Then by (2) in the main paper,  
\begin{align} \label{eq: YX}
Y^1_{a,b,c} &= 
p \sum_{i=1}^p (-1)^{i+1}   X^1_{a,b,(c-1)p+i}  
\end{align}
for $1\leq a \leq n_1^{(1)}$, $1\leq b \leq n_1^{(2)}$, $1\leq c \leq d_0$.
Therefore, \eqref{eq: YX} determines an affine function
\begin{align}\label{X0toX1**}
    Y^1 = \Phi'_1 (X^1) 
\end{align}
from  $  \mathbb{R}^{n_1^{(1)} \times n_1^{(2)} \times d_1}$ to $\mathbb{R}^{n_1^{(1)} \times n_1^{(2)} \times d_0} $.
Therefore, we obtain 
\begin{align} \label{eq: YX*}
Y^1_{a,b,c} &= 
p \sum_{i=1}^p (-1)^{i+1}   X^1_{a,b,(c-1)p+i} \nonumber
\\&=
p \sum_{i=1}^p (-1)^{i+1}   \max\{0,Z^1_{a,b,(c-1)p+i}\}  \nonumber
\\&=
\sum_{i=1}^p (-1)^{i+1}   \psi_{i}(pX^0_{1+(a-1)s_1,1+(b-1)s_1,c})  \nonumber
\\&= 
\phi_{p}(X^0_{1+(a-1)s_1,1+(b-1)s_1,c}).
\end{align}
The third equality holds due to Eqs. \eqref{eq: Z1X0} and \eqref{eq: psix}. 
By the previous discussion on properties of the function $\phi_{j}(x)$, the following map $\Psi_1 = \Phi'_1 \circ\Phi_1$ determined by Eq. \eqref{eq: YX*} 
\begin{align*}
\Psi_1 : \mathbb{R}^{n_0^{(1)} \times n_0^{(2)} \times d_0} &\stackrel{\Phi_1}{\longrightarrow}  \mathbb{R}^{n_1^{(1)} \times n_1^{(2)} \times d_1} \stackrel{\Phi'_1}{\longrightarrow} \mathbb{R}^{n_1^{(1)} \times n_1^{(2)}\times d_0} 
\\ X^0~~~~ &~~ \mapsto ~~~~~~~~~~ X^1 ~~~~~~~~~~ \mapsto ~~~~ Y^1
\end{align*}
sends $\lfloor\frac{d_1}{d_0}\rfloor^{n_1^{(1)} \times n_1^{(2)}\times d_0} = p^{n_1^{(1)} \times n_1^{(2)}\times d_0}$ distinct hypercubes 
$$
\left\{[0,\frac{1}{2p}], [\frac{1}{2p},\frac{2}{2p}], \cdots, [\frac{p-1}{2p},\frac{p}{2p}]\right\}^{{n_0^{(1)} \times n_0^{(2)}\times d_0}}
$$
in 
$[0,\frac{1}{2}]^{{n_0^{(1)} \times n_0^{(2)}\times d_0}}$ onto the same hypercube $[0,\frac12]^{n_1^{(1)} \times n_1^{(2)}\times d_0}$ of the intermediate layer $Y^1\in \mathbb{R}^{n_1^{(1)} \times n_1^{(2)}\times d_0} $ (this map is affine and bijective when restricted to each of the $\left\lfloor\frac{d_1}{d_0}\right\rfloor^{n_1^{(1)} \times n_1^{(2)}\times d_0}$ distinct hypercubes). 
Similarly (keep $d_0$ unchanged, and replace $n_0^{(1)}, n_0^{(2)}, n_1^{(1)} , n_1^{(2)}, d_1$ in $\Psi_1$ by $n_{l-1}^{(1)}, n_{l-1}^{(2)}, n_l^{(1)} , n_l^{(2)}, d_l$), we can define $\Phi_l,\Phi'_l,\Psi_l$ and $Y^l$ for $2\leq l\leq L-1$ such that the map
\begin{align*}
\Psi_{l} : \mathbb{R}^{n_{l-1}^{(1)} \times n_{l-1}^{(2)} \times d_0} &\stackrel{\Phi_l}{\longrightarrow}  \mathbb{R}^{n_l^{(1)} \times n_l^{(2)} \times d_l} \stackrel{\Phi'_l}{\longrightarrow} \mathbb{R}^{n_l^{(1)} \times n_l^{(2)}\times d_0} 
\\ Y^{l-1}~~~~ & ~ \mapsto ~~~~~~~~~ X^{l-1} ~~~~~~~~ \mapsto ~~~~ Y^l
\end{align*}
sends $\lfloor \frac{d_l}{d_0} \rfloor^{n_l^{(1)} \times n_l^{(2)}\times d_0}$ distinct hypercubes
$$
\left\{[0,\frac{1}{2p}], [\frac{1}{2p},\frac{2}{2p}], \cdots, [\frac{p-1}{2p},\frac{p}{2p}]\right\}^{{n_{l-1}^{(1)} \times n_{l-1}^{(2)}\times d_0}}
$$
in 
$[0,\frac{1}{2}]^{{n_{l-1}^{(1)} \times n_{l-1}^{(2)}\times d_0}}$ onto the hypercube $[0,\frac12]^{n_l^{(1)} \times n_l^{(2)}\times d_0}$ of the intermediate layer  $Y^l\in \mathbb{R}^{n_l^{(1)} \times n_l^{(2)}\times d_0} $. 
Therefore, 
\begin{align*}
\Psi_{L-1} \circ \Psi_{L-2} \circ \cdots  \circ \Psi_{2} \circ \Psi_{1}  : \mathbb{R}^{n_0^{(1)} \times n_0^{(2)} \times d_0}  &\rightarrow \mathbb{R}^{n_{L-1}^{(1)} \times n_{L-1}^{(2)}  \times d_0 }  
\\ X^0 ~~~~ &\mapsto ~~~~ {Y}^{L-1}
\end{align*}
sends $\prod_{l=1}^{L-1}\left\lfloor\frac{d_l}{d_0}\right\rfloor^{n_l^{(1)} \times n_l^{(2)}\times d_0}$ distinct hypercubes in $[0,\frac{1}{2}]^{{n_0^{(1)} \times n_0^{(2)}\times d_0}}$ onto the same hypercube $[0,\frac12]^{n_{L-1}^{(1)} \times n_{L-1}^{(2)}\times d_0}$ of the intermediate layer. Note that $\Phi_l  \circ \Phi'_{l-1}$ is the convolutional layer between $X^{l-1}$ and $X^l$ which has $d_l$ filter with size $f_l^{(1)}\times f_l^{(2)}\times d_{l-1}$ and stride $s_l$ due to Theorem $4$.   
Finally, by Theorem $2$, a one-layer ReLU CNN with input dimension $n_{L-1}^{(1)} \times n_{L-1}^{(2)} \times d_0$ and output dimension $n_L^{(1)} \times n_L^{(2)} \times d_L$ can divide the hypercube $[0,\frac12]^{n_{L-1}^{(1)} \times n_{L-1}^{(2)}\times d_0}$ into $R_{\mN'}$ regions. Put the network from $ X^{0} $ to $ {Y}^{{L-1}}$ and $ {Y}^{{L-1}}$ to $ {X}^{{L}}$ together, we prove the lower bound claim.  

(ii) We will prove this claim by induction on $L$.   
When $L=1$, by Theorem $2$ the claim is true. Now suppose that $L\geq 2$ and the claim is true for $L-1$. Let $\mN^*$ be the CNN obtained from $\mN$ by deleting the $L$-th hidden layer (i.e., $\mN^*$ consists of the first to the $L-1$-th layer of $\mN$). Then by induction hypothesis, we have
$$
R_{\mN^*} \leq R_{\mN''}\prod_{l=2}^{L-1}\sum_{i=0}^{n_0^{(1)} n_0^{(2)} d_0} \binom{n_l^{(1)}  n_l^{(2)}   d_l}{i}.
$$
Now we consider the $L$-th layer. Suppose that the CNN $\mN^*$ with parameters $\theta$ partitions the input space into $m$ distinct linear regions  $\mR_i ~ (1\leq i \leq m)$. Since each linear region $\mR_i$ corresponds to a certain activation pattern, the function   $\mathcal{F}_{\mN',\theta}$  
becomes an affine function when restricted to $\mR_i$. Therefore, after adding the $L$-th layer to $\mN^*$, when restricted to $\mR_i$, the function $\mathcal{F}_{\mN,\theta}\mid_{\mR_i}$ can be realised as a one-layer NN with  $n_0^{(1)} n_0^{(2)} d_0$ input neurons and $ n_l^{(1)}  n_l^{(2)}   d_l$ hidden neurons. By Proposition \ref{thm:ZaslavskyNN}, ${\mN}$ partitions $\mR_i$ into $\sum_{i=0}^{n_0^{(1)} n_0^{(2)} d_0} \binom{n_L^{(1)}  n_L^{(2)}   d_L}{i}$ distinct linear regions. Finally, we obtain
$$
R_\mN \leq R_{\mN^*} \sum_{i=0}^{n_0^{(1)} n_0^{(2)} d_0} \binom{n_L^{(1)}  n_L^{(2)}   d_L}{i} \leq
 R_{\mN''}\prod_{l=2}^{L}\sum_{i=0}^{n_0^{(1)} n_0^{(2)} d_0} \binom{n_l^{(1)}  n_l^{(2)}   d_l}{i},
$$
which completes the proof.
\end{proof}

\section{Calculation of the Number of Parameters for CNNs}

\begin{proof}[Proof of Lemma $4$ in the main paper]
For the $l$-th layer, the $k$-th weight matrix $W^{l,k}$ has $f_l^{(1)}\times f_l^{(2)}\times d_{l-1}$ entries and there are $d_l$ such weight matrices. The bias vector has length $d_l$. 
Thus there are $f_l^{(1)}\times f_l^{(2)}\times d_{l-1}\times d_l + d_l$ parameters in the $l$-th hidden layer.  Let $l$ range from $1$ to $L$, the  total number  of parameters equals 
$
 \sum_{l=1}^L \left(f_l^{(1)}\times f_l^{(2)}\times d_{l-1}\times d_l + d_l\right).
$
\end{proof}

\section{More Examples on the Maximal Number of Linear Regions for  One-Layer ReLU CNNs}
In this section, we list more examples on maximal number of linear regions for one-layer ReLU CNNs from Tables $1$~to~$5$, which is calculated according to Theorem $2$ in the main paper.

\begin{table}
\caption{The results for the maximal number of linear regions for a one-layer ReLU CNN with input dimension $2\times 2\times 1$,  $d_1$ filters with dimension $1\times 2\times 1$, stride $s=1$, and hidden layer dimension $2\times 1\times d_1$.}
\begin{center}
\resizebox{0.9\linewidth}{!}{
\begin{tabular}{c|c|c|c|c|c|c|c|c}
\hline
&$d_1=1$&$d_1=2$&$d_1=3$&$d_1=4$&$d_1=5$&$d_1=6$&$d_1=7$&$d_1=8$
\\
\hline
$R_\mN$ by Theorem $2$ & 4&16&49&121&256&484&841&1369
\\
\hline
Upper bounds by Theorem $1$ & 4&16&57&163&386&794&1471&2517
\\
\hline
Naive upper bounds & 4&16&64&256&1024&4096&16384&65536
\\
\hline
\end{tabular}\label{tab:1}
}
\end{center}
\end{table}

\begin{table}
\caption{The results for the maximal number of linear regions for a one-layer ReLU CNN with input dimension $1\times 4\times 1$,  $d_1$ filters with dimension $1\times 2\times 1$, stride $s=1$, and hidden layer dimension $1\times 3\times d_1$.}
\begin{center}
\resizebox{0.9\linewidth}{!}{
\begin{tabular}{c|c|c|c|c|c|c|c|c}
\hline
&$d_1=1$&$d_1=2$&$d_1=3$&$d_1=4$&$d_1=5$&$d_1=6$&$d_1=7$&$d_1=8$
\\
\hline
$R_\mN$ by Theorem $2$ & 8&55&217&611&1396&2773&4985&8317
\\
\hline
Upper bounds by Theorem $1$ & 8&57&256&794&1941&4048&7547&12951
\\
\hline
Naive upper bounds & 8&64&512&4096&32768&262144&2097152&16777216
\\
\hline
\end{tabular}\label{tab:2}
}
\end{center}
\end{table}

\begin{table}
\caption{The results for the maximal number of linear regions for a one-layer ReLU CNN with input dimension $2\times 3\times 1$,  $d_1$ filters with dimension $2\times 2\times 1$, stride $s=1$, and hidden layer dimension $2\times 1\times d_1$.}
\begin{center}
\resizebox{0.9\linewidth}{!}{
\begin{tabular}{c|c|c|c|c|c|c|c|c}
\hline
&$d_1=1$&$d_1=2$&$d_1=3$&$d_1=4$&$d_1=5$&$d_1=6$&$d_1=7$&$d_1=8$
\\
\hline
$R_\mN$ by Theorem $2$ & 4&16&64&247&836&2424&6126&13829
\\
\hline
Upper bounds by Theorem $1$ & 4&16&64&247&848&2510&6476&14893
\\
\hline
Naive upper bounds & 4&16&64&256&1024&4096&16384&65536
\\
\hline
\end{tabular}\label{tab:3}
}
\end{center}
\end{table}

\begin{table}
\caption{The results for the maximal number of linear regions for a one-layer ReLU CNN with input dimension $6\times 6\times 1$,  $d_1$ filters with dimension $1\times 3\times 1$, stride $s=2$, and hidden layer dimension $3\times 2\times d_1$.}
\begin{center}
\resizebox{0.9\linewidth}{!}{
\begin{tabular}{c|c|c|c|c|c|c|c|c}
\hline
&$d_1=1$&$d_1=2$&$d_1=3$&$d_1=4$&$d_1=5$&$d_1=6$&$d_1=7$&$d_1=8$
\\
\hline
$R_\mN$ by Theorem $2$ & 64 & 4096 & 250047 & 9129329 & 191102976 & 2537716544 & 23664622311 & 167557540697  
\\
\hline
Upper bounds by Theorem $1$ & 64 & 4096 & 262144 & 16777216 & 1073741824 & 68719476736 & 4398045536122 & 281443698512817 
\\
\hline
Naive upper bounds & 64 & 4096 & 262144 & 16777216 & 1073741824 & 68719476736 & 4398046511104 & 281474976710656
\\
\hline
\end{tabular}\label{tab:4}
}
\end{center}
\end{table}

\begin{table}
\caption{The results for the maximal number of linear regions for a one-layer ReLU CNN with input dimension $3\times 3\times 2$,  $d_1$ filters with dimension $2\times 2\times 2$, stride $s=1$, and hidden layer dimension $2\times 2\times d_1$.}
\begin{center}
\resizebox{0.9\linewidth}{!}{
\begin{tabular}{c|c|c|c|c|c|c|c|c}
\hline
&$d_1=1$&$d_1=2$&$d_1=3$&$d_1=4$&$d_1=5$&$d_1=6$&$d_1=7$&$d_1=8$
\\
\hline
$R_\mN$ by Theorem $2$ & 16 & 256 & 4096 & 65536 & 1048555 & 16721253 & 256376253 & 3459170397 
\\
\hline
Upper bounds by Theorem $1$ & 16 & 256 & 4096 & 65536 & 1048555 & 16721761 & 256737233 & 3485182163 
\\
\hline
Naive upper bounds & 16 & 256 & 4096 & 65536 & 1048576 & 16777216 & 268435456 & 4294967296 
\\
\hline
\end{tabular}\label{tab:5}
}
\end{center}
\end{table}

\newpage

\bibliography{number_region_CNN}
\bibliographystyle{icml2020}